\theoremstyle{plain}
\newcommand{\mcircled}[1]{\tikz[baseline=(char.base)]{
    \node[shape=circle,draw,inner sep=1pt, font=\scriptsize] (char) {#1};}}
\theoremstyle{plain}
\newtheorem{theorem}{Theorem}[section]
\newtheorem{proposition}[theorem]{Proposition}
\newtheorem{lemma}[theorem]{Lemma}
\theoremstyle{definition}
\newtheorem{definition}[theorem]{Definition}
\theoremstyle{remark}
\newtheorem{remark}[theorem]{Remark}
\title{Understanding Representation of Deep Equilibrium Models from Neural Collapse Perspective}
\author{%
  Haixiang Sun\\
  ShanghaiTech University\\
  \texttt{sunhx@shanghaitech.edu.cn} \\
  \And
  Ye Shi \thanks{Corresponding author.} \\
  ShanghaiTech University \\
  \texttt{shiye@shanghaitech.edu.cn} \\
  }
\begin{document}
\flushbottom
\addtocontents{toc}{\protect\setcounter{tocdepth}{-1}}
\maketitle

\begin{abstract}
Deep Equilibrium Model (DEQ), which serves as a typical implicit neural network, emphasizes their memory efficiency and competitive performance compared to explicit neural networks. However, there has been relatively limited theoretical analysis on the representation of DEQ. In this paper, we utilize the Neural Collapse ($\mathcal{NC}$) as a tool to systematically analyze the representation of DEQ under both balanced and imbalanced conditions. $\mathcal{NC}$ is an interesting phenomenon in the neural network training process that characterizes the geometry of class features and classifier weights. While extensively studied in traditional explicit neural networks, the $\mathcal{NC}$ phenomenon has not received substantial attention in the context of implicit neural networks. 
We theoretically show that $\mathcal{NC}$ exists in DEQ under balanced conditions. Moreover, in imbalanced settings, despite the presence of minority collapse, DEQ demonstrated advantages over explicit neural networks. These advantages include the convergence of extracted features to the vertices of a simplex equiangular tight frame and self-duality properties under mild conditions, highlighting DEQ's superiority in handling imbalanced datasets. Finally, we validate our theoretical analyses through experiments in both balanced and imbalanced scenarios. 

\end{abstract}

\section{Introduction}
Recently, there has been significant research on \textit{implicitly-defined layers} in neural networks \cite{cvxpylayers2019, amos2017optnet, bai2019deep, blondel2022efficient, chen2018neural, gould2021deep, gu2020implicit, sun2022alternating}, where the output is implicitly mapped from the input under certain conditions. These layers embed interpretability and introduce inductive bias \cite{jacot2018neural} into black-box neural networks, demonstrating superior performance compared to existing explicit layers. 

Among these implicit networks, the Deep Equilibrium Model (DEQ) is a memory-efficient architecture that represents all hidden layers as the equilibrium point of a nonlinear fixed-point equation. Due to the absence of a closed-form solution in its forward process, DEQ can be viewed as having an infinite number of layers during iteration as long as the threshold is set low enough, enhancing its ability to fit input data. Consequently, its representational capacity is relatively stronger compared to a single-layer network structure. This phenomenon explains why DEQ has achieved state-of-the-art results in classification tasks compared to existing architectures like ResNet. For instance, it has been successfully applied to language tasks and image classification tasks, reaching state-of-the-art performance. Additionally, DEQ can be applied in various domains and integrated with numerous other models, including inverse problems \cite{gilton2021deep}, Neural ODEs \cite{ding2024two}, diffusion models \cite{huang2024efficient, pokle2022deep}, Gaussian processes \cite{gao2023wide}, and more.

However, recent research reveals a phenomenon called Neural Collapse ($\mathcal{NC}$) concerning the learned deep representations across datasets in image classification tasks \cite{papyan2020prevalence}. Under the $\mathcal{NC}$ regime, the last-layer feature of each sample in neural networks collapses to their within-class mean, and the classifier vector converges to a simplex Equiangular Tight Frame (ETF). Theoretical analyses \cite{dang2023neural, mixon2020neural, tirer2022extended} indicate that under the Unconstrained Features Mode (UFM) condition, specific features $\boldsymbol{H}^0$ can be isolated from the entire network, known as the layer-peeled model \cite{fang2021exploring}. In this scenario, Neural Collapse ($\mathcal{NC}$) is observed under certain conditions, suggesting that $\mathcal{NC}$ is agnostic to the backbone of feature extraction. Moreover, since $\mathcal{NC}$ measures the degree of proximity between features of the same category, an imbalanced dataset can exert a more negative influence on the performance of $\mathcal{NC}$. For instance, classes with fewer samples may not separate well and could converge in the same direction, leading to what is known as \textit{Minority Collapse} \cite{fang2021exploring}. Thus, the $\mathcal{NC}$ metric serves as a valuable indicator of a model's behavior in the context of imbalanced datasets.

The reasons behind the superior performance of DEQ still lack theoretical proof and comprehensive quantitative analysis. Additionally, to the best of our knowledge, no prior work has integrated DEQ with imbalanced scenarios. In our study, we integrate DEQ with layer-peeled models, add constraints with respect to weights $\boldsymbol{W}_\text{DEQ}$, and consider the results of fixed-point iteration as the output of DEQ. Therefore, we analyze the performance of $\mathcal{NC}$ in DEQ by continuously deriving the lower bound of the loss function under certain constraints, allowing us to assess how $\mathcal{NC}$ manifests in the training performance of the network. Similarly, we apply the same operations to explicit neural networks for comparison. Our results show that DEQ performs similarly to explicit neural networks under balanced settings. We further extend the dataset to imbalanced conditions and analyze the $\mathcal{NC}$ performance in DEQ, explaining why DEQ tends to outperform explicit neural networks under mild conditions. We systematically analyze performance in terms of feature convergence, distance to the Simplex ETF, and the parallel relationship between extracted features and classifier weights. These analyses uncover the reasons behind the superior performance of DEQ compared to explicit neural networks during training. Additionally, the experimental results in both balanced and imbalanced scenarios validate our theoretical analyses. 

Our main contributions are:
\begin{itemize}
    \item We systematically analyzed the representation of DEQ from the $\mathcal{NC}$ perspective and compared their performance with explicit neural networks. Our theoretical analysis shows that both DEQ and explicit neural networks exhibit the $\mathcal{NC}$ phenomenon in balanced datasets. 
        
    \item Under imbalanced settings, we theoretically proved the convergence of extracted features to the vertices of a simplex ETF and alignment with classifier weights under certain conditions, demonstrating DEQ's advantages over explicit neural networks under some mild conditions. 
    
    \item Experimental results on Cifar-10 and Cifar-100 validated our theoretical findings for distinguishing the differences between DEQ and explicit neural networks.
\end{itemize}

\section{Background and related works}

We consider a classification task with $K$ classes. Let $n_k$ denote the number of training samples in each class $k$, and $N=\sum\limits_{k=1}^K n_k$ represent the total number of training samples. A traditional neural network can be expressed as a mapping: 
\begin{equation}
\label{neuralnet}
    \psi(\boldsymbol x)= \boldsymbol W\phi(\boldsymbol x) + \boldsymbol b,
\end{equation}
where $\phi(\boldsymbol x):\mathbb{R}^{\text{in}\times N}\rightarrow\mathbb{R}^{D\times N}$ is the feature extraction, $\boldsymbol W\in\mathbb{R}^{K\times D}$ and $\boldsymbol b\in\mathbb{R}^{K}$ are the classifiers and bias in the last layer, respectively. For simplicity, we consider the bias-free case and omit the term $\boldsymbol b$. Besides, we will denote $\boldsymbol H = \phi(\boldsymbol x)$ in later sections.

\subsection{Deep Equilibrium Models}
There have been numerous neural network architectures designed for various practical tasks from different perspectives \cite{gao2023wide, lee2017deep, lorraine2020optimizing, mei2018mean, ren2021comprehensive}. DEQ, a typical implicit network \cite{el2021implicit, tsuchida2022declarative}, incorporates unrolling methods \cite{domke2012generic, monga2021algorithm}, which are devised for training arbitrarily deep networks by integrating all the network layers into one \cite{bai2019deep, bai2020multiscale, bai2021stabilizing, ling2024deep,ling2023global,xie2022optimization}. 

Let $f_\theta(\boldsymbol z,\boldsymbol x)$ represent a DEQ layer with input $\boldsymbol x$ parameterized by $\theta$. When $z^\star$ reaches the equilibrium point, it satisfies:
\begin{equation}
    g_\theta(\boldsymbol z^\star, \boldsymbol x)\triangleq f_\theta(\boldsymbol z^\star, \boldsymbol x)-\boldsymbol z^\star=0.
\end{equation}

The forward procedure mostly employs the Broyden solver \cite{broyden1965class} for iterative solving:
\begin{equation}
    \boldsymbol z_{t+1} = \boldsymbol z_t - \boldsymbol B_t^{-1} g_\theta (\boldsymbol z_t,\boldsymbol x),
\end{equation}
where $\boldsymbol B_t^{-1}$ refers to the approximation of inverse matrix $\nabla_{\boldsymbol z}^{-1} g_\theta(\boldsymbol z_t,\boldsymbol x)$, as well as the same parameter $\theta$ shared across iterations. However, the solution can be quite unstable, and efforts have been made to enhance stability and robustness \cite{li2022cerdeq,ramzi2023test,wei2022certified,winston2020monotone}. Especially, regarding the computation of the inverse matrix, it can be expanded in the form of a Neumann series \cite{geng2021training,yang2022closer}. Besides, accelerating and stabilizing the backward procedure is also an important issue in DEQ \cite{fung2022jfb}. 

\subsection{Neural Collapse \texorpdfstring{$\mathcal{NC}$}{NC}}

The phenomenon of $\mathcal{NC}$ was initially uncovered by \cite{papyan2020prevalence}, which is considered an intriguing regularity in neural network training with many elegant geometric properties \cite{thrampoulidis2022imbalance,yaras2022neural,zhu2021a}. When the model is at the terminal phase of training (TPT), or more precisely, achieves zero training error, the within-class means of features and the classifier vectors converge to the vertices of a simplex Equiangular Tight Frame (ETF) on a balanced dataset.

\begin{definition}
    (Simplex Equiangular Tight Frame) A collection of points ${\boldsymbol s}_i\in \mathbb{R}^D$, $i=1,2,\cdots, K$, is said to be a simplex equiangular tight frame if
    \begin{equation}
        {\boldsymbol S}=\alpha\sqrt{\frac{K}{K-1}}{\boldsymbol P}\Big({\boldsymbol I}_K-\frac{1}K {\boldsymbol 1}_K{\boldsymbol 1}_K^T\Big),
    \end{equation}
\end{definition}
where $\alpha$ is a non-zero scalar, ${\boldsymbol S}=[{\boldsymbol s}_1,\cdots,{\boldsymbol s}_k]\in \mathbb{R}^{D\times K}$, ${\boldsymbol I}_K \in \mathbb{R}^{K\times K}$ is the identity matrix, ${\boldsymbol 1}_K$ is the ones vector, and ${\boldsymbol P} \in \mathbb{R}^{D\times K}(D \geq K)$ is a partial orthogonal matrix such that ${\boldsymbol P}^T{\boldsymbol P} = {\boldsymbol I}_K$.

$\mathcal{NC}$ incorporates the following four properties of the last-layer features and classifiers in deep learning training on balanced datasets:

$\mathcal{NC}1$: \textbf{Variability collapse:} The feature within-class converges to a unique vector, \textit{i.e.}, for any sample $i$ in the same class $k$, its feature $\boldsymbol h_{k,i}$ satisfies $\|\boldsymbol h_{k,i}-\bar{\boldsymbol h}_k\|\rightarrow 0, k\in[k]$, with the training procedure.

$\mathcal{NC}2$: \textbf{Convergence to simplex ETF:}  The mean value $\boldsymbol h^\star$ of optimal features for each class collapses to the vertices of the simplex ETF.

$\mathcal{NC}3$: \textbf{Convergence to self-duality:} The class means and the classifier weights mutually converge: $\frac{\boldsymbol W^\star}{\|\boldsymbol W\|}=\frac{\boldsymbol H^\star}{\|\boldsymbol H\|}$.

$\mathcal{NC}4$: \textbf{Nearest Neighbor:} The classifier determines the class based on the Euclidean distances among the feature vector and the classifier weights.

\subsection{Layer-peeled model under balanced and imbalanced conditions}

Current studies often focus on the case where only the last-layer features and classifier are learnable without considering the layers in the backbone network under the assumption of Unconstrained Features Mode (UFM) \cite{zhu2021a}, which can also be referred to as the Layer-peeled Model \cite{fang2021exploring,kothapalli2023neural}.
First, we define the feasible set of parameters:
\begin{equation}
    \mathcal{C}=\left\{\boldsymbol w_k,h_{k,i}\mid \frac{1}K \sum\limits_{k=1}^K \|\boldsymbol w_k\|^2\leq E_W,\frac{1}K \sum\limits_{k=1}^K \frac{1}{n_k} \sum\limits_{i=1}^{n_k} \| \boldsymbol h_{k,i}\|^2\leq E_H\right\}.
\end{equation}
\begin{definition} (Layer-peeled Model) When $\boldsymbol H$ and $\boldsymbol W$ are the last layer classifier and weights respectively, then the optimization process of the neural network can be reformulated as:
\begin{equation}
\label{opt_prob}
    \min\limits_{\boldsymbol W, \boldsymbol H} ~~~~\frac{1}N\sum\limits_{k=1}^K\sum\limits_{i=1}^{n_k} \mathcal{L}({\boldsymbol W}\boldsymbol h_{k,i},{\boldsymbol y}_k)~~~\text{s.t.}~~\boldsymbol w_k,\boldsymbol h_{k,i}\in \mathcal{C},
\end{equation}
where $E_H$ and $E_W$ are two predefined values, $N$ refers to the total number of samples.
\end{definition}

It should be noted that all the loss functions $\mathcal{L}$ analyzed in our study are cross-entropy, as most current research focuses on this widely used deep learning classification loss function \cite{huang2017densely,lecun2015deep}. And though the optimization program is nonconvex; however, it can generally be mathematically tractable for analysis. Besides, experiments with unregularized loss function and randomly initialized gradient descent typically converge to non-collapse global minimizers \cite{tirer2022extended}.

Under UFM, most $\mathcal{NC}$ studies are based on 1-2 conventional layers of weights, However, there is also work \cite{dang2023neural,tirer2022extended} that extends it to analyze $M$ linear layers. Additionally, various studies have revealed additional characteristics of $\mathcal{NC}$, such as its impact on generalization \cite{galanti2021role,hui2022limitations,jiang2023generalized,yaras2022neural}, its influence on feature learning \cite{rangamani2023feature}, global optimality of the network \cite{zhou2022optimization,zhu2021a} and others. Therefore, $\mathcal{NC}$ is a very efficient tool to analyze the performance of neural networks.

\textbf{Imbalanced learning} 
However, $\mathcal{NC}$ will not occur under imbalanced settings generally. This phenomenon arises due to the imbalance in sample quantities, leading to challenges in adequately fitting features for certain classes. This is commonly referred to as \textit{minority collapse} \cite{cao2019learning,fang2021exploring}. As the degree of imbalance increases, it is expected that classifiers for minority classes converge. When Minority Collapse occurs, the neural network predicts equal probabilities for all minority classes, regardless of the input. 

To enhance learning performance in imbalanced scenarios \cite{zhang2023deep} and mitigate the effects of minority collapse, several methods have been proposed. \cite{fang2021exploring} introduced convex relaxation, modifying a loss function \cite{xie2023neural}, and incorporating a regularization term \cite{liu2023inducing}. The reweighted approach is also widely applied, with some studies measuring it based on sample quantities \cite{ren2018learning, yang2022inducing}. Additionally, adaptive techniques such as AutoBalance \cite{li2021autobalance} have been introduced, which incorporates a bilevel optimization framework, along with logit balance \cite{ren2020balanced,wang2023balancing,zhong2023understanding,zhu2022balanced}.

\begin{figure}
    \centering
    \includegraphics[scale=0.43
    ]{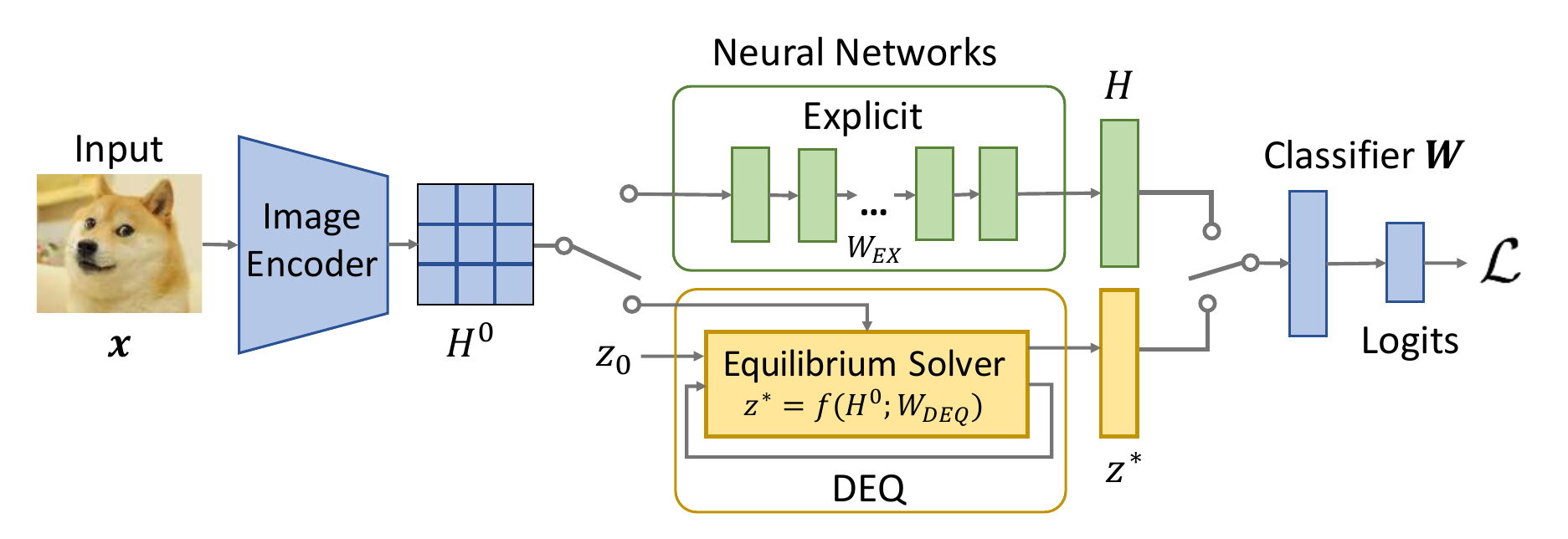}
    \caption{Illustration of feature extraction. After extracting feature maps $\boldsymbol H^0$, further features $\boldsymbol H$ or $\boldsymbol z^\star$ can be obtained by passing through an explicit neural network or DEQ. The final step involves the classifier to obtain predicted logits. To ensure a fair comparison, we standardize the backbone network and its output $\boldsymbol{H}^0$ across all conditions.}
    \label{fig_pipeline}
\end{figure}

\section{Comparison under balanced setting}

\begin{equation}
\dfrac{\partial z^\star}{}
\end{equation}
In this section, we first analyze the $\mathcal{NC}$ phenomenon in DEQ under balanced settings. As illustrated in Figure \ref{fig_pipeline}, after completing the initial feature extraction, we further examine the feature $\boldsymbol H$ obtained respectively by explicit neural networks and DEQ to reveal the $\mathcal{NC}$ phenomenon.

\subsection{\texorpdfstring{$\mathcal{NC}$}{NC} in Explicit Neural Networks}
Building upon (\ref{opt_prob}), we analyze $\mathcal{NC}$ in explicit neural networks by considering the following constrained optimization problem during training:
\begin{equation}
\label{opt_exp}
    \begin{aligned}
        \min\limits_{\boldsymbol W, \boldsymbol W_\text{EX}, \boldsymbol H^0} &~~~~\frac{1}N\sum\limits_{k=1}^K\sum\limits_{i=1}^{n} \mathcal{L}({\boldsymbol W} \boldsymbol W_\text{EX}\boldsymbol h^0_{k,i},{\boldsymbol y}_k)\\
        \text{s.t.}~~~~&\|\boldsymbol W_\text{EX}\|_F\leq E_H; \boldsymbol w_k,\boldsymbol h_{k,i}\in\mathcal{C},
    \end{aligned}
\end{equation}
where each $n_k$ is set to $n$ under the balanced setting, $\boldsymbol W_\text{EX}$ represents the subsequent network weights. For ease of comparison with DEQ, we assume that the final feature is represented as $\boldsymbol H = \boldsymbol W_\text{EX} \boldsymbol H^0$. Traditional neural network structures are nonconvex, making them challenging to analyze due to their highly interactive nature. Employing the layer-peeled model alleviates the difficulty of $\mathcal{NC}$ analysis.

\subsection{\texorpdfstring{$\mathcal{NC}$}{NC} in Deep Equilibrium models}

Building upon recent investigations into the $\mathcal{NC}$ phenomenon, we embrace the layer-peeled model, where the last-layer features $\boldsymbol h = \phi(\boldsymbol x)$ (equilibrium points in DEQ $\boldsymbol z^\star$) as unconstrained optimization variables. Accordingly, we add the following constraints to enforce $\mathcal{NC}$ in DEQ:
\begin{equation}
    \mathcal{C}_\text{DEQ}\triangleq\big\{\boldsymbol z^\star,\boldsymbol W_\text{DEQ}|~\boldsymbol z^\star=f(\boldsymbol H^0; \boldsymbol W_\text{DEQ}),~~\boldsymbol \|\boldsymbol W_\text{DEQ}\|_F\leq E_H\big\}.
\end{equation}
Compared to explicit layers, the active parameter in Deep Equilibrium models is $W_\text{DEQ}$, hence imposing restrictions on it to align with the same feasible space. Then the formulation of DEQ with $\mathcal{NC}$ becomes:
\begin{equation}
\label{DEQ_NC}
    \begin{aligned}
        \min\limits_{\boldsymbol W,\boldsymbol W_\text{DEQ}, \boldsymbol z^\star,\boldsymbol H^0} ~~~~&\frac{1}N\sum\limits_{k=1}^K\sum\limits_{i=1}^{n_k} \mathcal{L}({\boldsymbol W}\boldsymbol z^\star,{\boldsymbol y}_k)\\
        \text{s.t.}~~~~&\boldsymbol w_k,\boldsymbol h_{k,i}\in\mathcal{C}; \boldsymbol z^\star,\boldsymbol W_\text{DEQ} \in \mathcal{C}_\text{DEQ}.
    \end{aligned}
\end{equation}

No matter whether under DEQ or explicit neural networks, these constraints must be imposed. This is because when these constraints are satisfied and the loss function reaches its lower bound, the $\mathcal{NC}$ phenomenon is guaranteed. In our theoretical analysis, we assume that the DEQ is linear, that is, $\boldsymbol z^\star=\text{fixed-point}(f_\theta(\boldsymbol x),\boldsymbol z) = \sum\limits_{i=0}^\infty\boldsymbol W^i_\text{DEQ} \boldsymbol x$. Detailed analysis incorporating these constraints is provided in Appendix \ref{Supp_sec A}. 

The following theorem elucidates the specific scenarios in which the $\mathcal{NC}$ phenomenon occurs. For a fair comparison, we assume that the extracted features $\boldsymbol H^0$ of the image encoder are the same in the derivation.

\begin{theorem}({Feature collapse of explicit fully connected layers and implicit deep equilibrium models under balanced setting})\label{theorem_nc_balance}
    Suppose (\ref{opt_exp}) and (\ref{DEQ_NC}) reaches its minimal, then
    
    $\mathcal{NC}1$: For $\forall~ k=1,2,\cdots,K$ and $\forall~ i=1,2,\cdots,n$:
    \begin{equation*}
        \boldsymbol W_\text{EX}\boldsymbol h^0_{k,i}=\boldsymbol W_\text{EX}\boldsymbol h^0_{k},
    \end{equation*}
    where $\boldsymbol h^0_{k}=\sum\limits_{i\in\pi(k)} \boldsymbol h^0_{k,i}$. Similarly, if the model is DEQ, then
    \begin{equation*}
        f(\boldsymbol h^0_{k,i}; \boldsymbol W_\text{DEQ}) = f(\boldsymbol h^0_{k}; \boldsymbol W_\text{DEQ}).
    \end{equation*}

    $\mathcal{NC}2$: The classifier aligns to the Simplex ETF, regardless of whether explicit neural network and DEQ are applied:
    \begin{equation*}
        \begin{aligned}
            \boldsymbol W\boldsymbol W^T &= \sqrt{E_W/E_H}\boldsymbol W \boldsymbol W_\text{EX}\boldsymbol H^0 \\
            &= \sqrt{E_W/E_H}\boldsymbol W f(\boldsymbol H^0; \boldsymbol W_\text{DEQ}) \\
            &= \dfrac{KE_W}{K-1}\left(\boldsymbol 1_K -\frac{1}K\boldsymbol 1_K\boldsymbol 1_K^T\right).
        \end{aligned}
    \end{equation*}

    $\mathcal{NC}3$: For $\forall~ k=1,2,\cdots,K$, the feature aligns to the weights:
    \begin{equation*}
        \boldsymbol W_\text{EX} \boldsymbol h_k^0\propto \boldsymbol W_k.
    \end{equation*}
    In DEQ cases:
    \begin{equation*}
        f(\boldsymbol h^0_{k}; \boldsymbol W_\text{DEQ})\propto \boldsymbol W_k.
    \end{equation*}
\end{theorem}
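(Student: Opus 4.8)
The plan is to reduce both the explicit and implicit formulations to the same layer-peeled optimization problem already studied in the Neural Collapse literature, and then transport the known characterization of its minimizers back through the respective linear maps. Concretely, in the explicit case one sets $\boldsymbol{H} = \boldsymbol{W}_\text{EX}\boldsymbol{H}^0$ and observes that $(\ref{opt_exp})$ becomes $\min_{\boldsymbol{W},\boldsymbol{H}} \frac1N\sum_{k,i}\mathcal{L}(\boldsymbol{W}\boldsymbol{h}_{k,i},\boldsymbol{y}_k)$ subject to $\boldsymbol{w}_k,\boldsymbol{h}_{k,i}\in\mathcal{C}$, since the constraint $\|\boldsymbol{W}_\text{EX}\|_F\le E_H$ together with $\|\boldsymbol{H}^0\|$ bounded is exactly what produces the energy budget $E_H$ on $\boldsymbol{H}$. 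For DEQ, under the linearity assumption $\boldsymbol{z}^\star = \sum_{i=0}^\infty \boldsymbol{W}_\text{DEQ}^i \boldsymbol{h}^0 = (\boldsymbol{I}-\boldsymbol{W}_\text{DEQ})^{-1}\boldsymbol{h}^0$, the map $\boldsymbol{h}^0\mapsto\boldsymbol{z}^\star$ is again linear, so the same substitution $\boldsymbol{H} = f(\boldsymbol{H}^0;\boldsymbol{W}_\text{DEQ})$ collapses $(\ref{DEQ_NC})$ to the identical canonical problem. Thus $\mathcal{NC}1$--$\mathcal{NC}3$ for both models follow once we establish them for the canonical layer-peeled program.

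The second step is to invoke (and, in the appendix, reprove) the standard global-optimality analysis for the balanced UFM with cross-entropy loss: at any minimizer, (i) features within a class are identical, giving $\mathcal{NC}1$; (ii) the class-mean matrix and the classifier $\boldsymbol{W}$ form a simplex ETF scaled by the energy budgets, which after squaring gives the stated $\boldsymbol{W}\boldsymbol{W}^T = \frac{KE_W}{K-1}(\boldsymbol{I}_K - \frac1K\boldsymbol{1}_K\boldsymbol{1}_K^T)$ (I would note the paper writes $\boldsymbol{1}_K$ where $\boldsymbol{I}_K$ is meant); and (iii) $\boldsymbol{W}$ and the feature class means are proportional, i.e. self-duality $\mathcal{NC}3$. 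The chain of equalities in $\mathcal{NC}2$ is then just the statement that $\boldsymbol{W}\boldsymbol{H}$ equals the same ETF Gram matrix whether $\boldsymbol{H}$ arose from $\boldsymbol{W}_\text{EX}\boldsymbol{H}^0$ or from $f(\boldsymbol{H}^0;\boldsymbol{W}_\text{DEQ})$, with the factor $\sqrt{E_W/E_H}$ coming from balancing the two norm constraints at the optimum (the KKT stationarity forces both energy constraints to be active, fixing the ratio of $\|\boldsymbol{W}\|$ to $\|\boldsymbol{H}\|$). For $\mathcal{NC}1$ I would phrase the argument via strict convexity of $\mathcal{L}$ in its first argument along the relevant direction: if two features in class $k$ differed, replacing both by their average strictly decreases the averaged loss while respecting the energy constraint (Jensen), contradicting optimality.

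The main obstacle I anticipate is the ``effective feature space'' subtlety created by the extra linear map. Writing $\boldsymbol{H} = \boldsymbol{W}_\text{EX}\boldsymbol{H}^0$, the reachable set of $\boldsymbol{H}$ is not all matrices of bounded energy but only those in the column space compatible with the fixed $\boldsymbol{H}^0$ and a Frobenius-bounded $\boldsymbol{W}_\text{EX}$; one must check that the ETF-geometry optimum of the canonical problem is still attainable under this restriction, i.e. that $\boldsymbol{H}^0$ has rank at least $K$ (or that $\mathrm{span}(\boldsymbol{H}^0)$ is rich enough) so that some admissible $\boldsymbol{W}_\text{EX}$ realizes a simplex-ETF $\boldsymbol{H}$ with the right scale. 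The same point arises for DEQ, where one additionally needs $\boldsymbol{I}-\boldsymbol{W}_\text{DEQ}$ invertible and the induced operator norm to deliver the $E_H$ budget; here the Neumann-series form is convenient because the attainable $\boldsymbol{z}^\star$ for varying $\boldsymbol{W}_\text{DEQ}$ with $\|\boldsymbol{W}_\text{DEQ}\|_F\le E_H$ is essentially the same bounded-energy set as in the explicit case, which is precisely why DEQ and explicit networks behave identically in the balanced regime. The remaining details — activating both constraints via KKT, extracting the $\sqrt{E_W/E_H}$ factor, and translating ``class means'' back through the (shared) map — are routine and belong in Appendix~\ref{Supp_sec A}.
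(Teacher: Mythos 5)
Your strategy — substitute $\boldsymbol H=\boldsymbol W_\text{EX}\boldsymbol H^0$ (resp.\ $\boldsymbol H=f(\boldsymbol H^0;\boldsymbol W_\text{DEQ})$), reduce to the canonical balanced layer-peeled program, and import the known ETF characterization of its minimizers — is genuinely different from what the paper does. The paper never invokes the canonical result as a black box; it re-derives the whole lower bound by hand: a log-sum-exp/Jensen lemma (Lemma \ref{lemma_log}) turns each cross-entropy term into a linear expression in $\boldsymbol h_{k,i}^T\boldsymbol w_{k'}$, a Young-type inequality with a free parameter $C_5=\sqrt{E_H/E_W}$ decouples features from classifier weights, and then the two architectures are treated separately — a one-step bound $\|\boldsymbol W_\text{EX}\boldsymbol v\|^2\le\frac12(\|\boldsymbol W_\text{EX}\|_F^2+\|\boldsymbol v\|^2)$ for the explicit case, and a recursion on partial Neumann sums $\mathcal G^j=\boldsymbol W_\text{DEQ}\mathcal G^{j-1}+(\bar{\boldsymbol h}^0_i-\boldsymbol h^0_{k,i})$ for the DEQ. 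All of $\mathcal{NC}1$--$\mathcal{NC}3$ are then read off from the equality conditions of each inequality in the chain. Your route is cleaner where it works, but the paper's route is what actually produces the quantitative by-product (Theorem \ref{thm_balance}): the two architectures attain \emph{different} loss infima, $-2C_1\frac{K}{K-1}\sqrt{E_WE_H}$ versus $-C_1\frac{K}{K-1}\sqrt{E_WE_H}$.

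That last point exposes the genuine gap in your reduction. You assert that the attainable set of $\boldsymbol z^\star$ under $\|\boldsymbol W_\text{DEQ}\|_F\le E_H$ is ``essentially the same bounded-energy set'' as the explicit one, so that both problems collapse to the identical canonical program. If that were literally true the two infima would coincide, but the paper's own analysis (and its Theorem \ref{thm_balance}) shows they differ by a factor of two in the leading term: the feasible set for $\boldsymbol H$ is a set of \emph{products} of a Frobenius-bounded map with an energy-bounded $\boldsymbol H^0$, not the canonical ball $\frac1K\sum_{k,i}\frac{1}{n_k}\|\boldsymbol h_{k,i}\|^2\le E_H$, and the explicit and Neumann-series maps trace out different such sets. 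So the reduction needs an explicit argument that (a) the composite feasible set is contained in an energy ball of a computable radius, and (b) the ETF configuration at that radius is attainable by some admissible $(\boldsymbol W_\text{EX},\boldsymbol H^0)$ or $(\boldsymbol W_\text{DEQ},\boldsymbol H^0)$ — and the radius is architecture-dependent. Relatedly, you treat $\boldsymbol H^0$ as fixed (hence your rank-$K$ worry), but in (\ref{opt_exp}) and (\ref{DEQ_NC}) $\boldsymbol H^0$ is an optimization variable; the paper's equality conditions in fact impose structure on it (e.g.\ $\bar{\boldsymbol h}^0_i=\frac1K\sum_k\boldsymbol h^0_{k,i}=0$), which is part of the characterization of the minimizers rather than a hypothesis to be assumed. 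Your instinct about the ``effective feature space'' being the crux is exactly right; the proposal just stops short of resolving it, and resolving it is most of the work.
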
 
The theorem demonstrates that when the network training reaches its limit, i.e., when the loss function reaches its minimum, the $\mathcal{NC}$ phenomenon emerges regardless of whether the chosen network is DEQ or explicit neural network. Besides, in certain scenarios, the lower bound of the loss function for DEQ is relatively smaller compared to explicit neural networks. More detailed proofs are in Appendix Section \ref{Supp_sec A}.

\section{Comparison under imbalanced setting}

In this section, we analyze the performance differences between DEQ and explicit neural network on imbalanced datasets. We observe that, unlike in balanced scenarios, as long as certain conditions are met, the advantages of DEQ over explicit neural network become more pronounced on imbalanced datasets. And we provide theoretical evidence to support this phenomenon.

Suppose the total number of classes is $K$, with $K_A$ being the number of majority classes and $K_B = K - K_A$ being the number of minority classes. Each majority class has $n_A$ samples, and each minority class has $n_B$ samples. The total number of samples is given by $N = K_A n_A + K_B n_B$. Note that $n_A>n_B$ with no requirement for $K_A$ to be greater than $K_B$. We first start with the loss function, which can be partitioned into two components as follows:
\begin{equation}
\label{imb_loss}  
\begin{aligned}
    &\min\limits_{\boldsymbol W,\tilde{\boldsymbol W}, \boldsymbol H^0} \frac{K_An_A}N\sum\limits_{k=1}^{K_A}\sum\limits_{i=1}^{n_A} \mathcal{L}({\boldsymbol W}\tilde{\boldsymbol W}\boldsymbol H^0,{\boldsymbol y}_k) + \frac{K_Bn_B}N\sum\limits_{k=K_A+1}^{K_B}\sum\limits_{i=1}^{n_B} \mathcal{L}({\boldsymbol W}\tilde{\boldsymbol W}\boldsymbol H^0,{\boldsymbol y}_k),\\
    &\quad\text{s.t.}~~~ \tilde{\boldsymbol W}\in \left\{\mathcal{C}_\text{EX} ~~\text{or} ~~\mathcal{C}_\text{DEQ}\right\},~~\boldsymbol w_k,\boldsymbol h_{k,i}\in\mathcal{C},
\end{aligned}
\end{equation}
where $\tilde{\boldsymbol W}$ represents the weights of Deep Equilibrium Models $\boldsymbol W_\text{DEQ}$ and explicit neural network $\boldsymbol W_\text{EX}$. To analyze the $\mathcal{NC}$ phenomenon, we present the results in the following theorem:
\begin{theorem}
\label{thm_imbalance}
    (Neural Collapse under imbalanced settings on explicit neural networks and deep equilibrium models)

    When the loss function reaches the minimum, then
    
    $\mathcal{NC}1$: For $\forall~ k=1,2,\cdots,K$ and $\forall~ i=1,2,\cdots,n$:
    \begin{equation*}
        \boldsymbol W_\text{EX}\boldsymbol h^0_{k,i}=\boldsymbol W_\text{EX}\boldsymbol h^0_{k},
    \end{equation*}
    where $\boldsymbol h^0_{k}=\sum\limits_{i\in\pi(k)} \boldsymbol h^0_{k,i}$.  Similarly, if the model is DEQ, then
    \begin{equation*}
        f(\boldsymbol h^0_{k,i}; \boldsymbol W_\text{DEQ}) = f(\boldsymbol h^0_{k}; \boldsymbol W_\text{DEQ}).
    \end{equation*}
    
    $\mathcal{NC}2$: Not exists, but the results of explicit neural network and DEQ can be compared:
    
    Here we denote $\left(\boldsymbol h^0_k\right)^T\boldsymbol h^0_{k'}=\boldsymbol m_{k,k'}$ and $\boldsymbol S$ is a $K$-Simplex ETF, if
    \begin{equation*}
    E_H < 2\boldsymbol S_{ij} - \boldsymbol m_{ij} < \frac{1}{1-E_H}
    \end{equation*}
    is satisfied, the following inequality
        \begin{equation*}
            \left\|\left(\boldsymbol W_\text{EX}\boldsymbol H^0\right)^T\left(\boldsymbol W_\text{EX}\boldsymbol H^0\right)  - \boldsymbol S\right\|_F>\left\|f^T(\boldsymbol H^0; \boldsymbol W_\text{DEQ})f(\boldsymbol H^0; \boldsymbol W_\text{DEQ}) - \boldsymbol S\right\|_F
        \end{equation*}        
    holds.
    
    $\mathcal{NC}3$: Similarly as $\mathcal{NC}2$, though it does not exist, the results can still be compared, when
    \begin{equation*}
        \frac{E_H}{E_w+E_H}+E_H(1-E_H)<2
    \end{equation*}
    is satisfied, then the cosine distance satisfies:
    \begin{equation*}
        \cos\left( f(\boldsymbol h_{k}; \boldsymbol W_\text{DEQ}),\boldsymbol w_k\right)/\cos \left(\boldsymbol W_\text{EX}\boldsymbol h_k,\boldsymbol w_k\right)>1.
    \end{equation*}
\end{theorem}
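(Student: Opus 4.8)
The plan is to treat the imbalanced layer-peeled problem in (\ref{imb_loss}) as a constrained optimization and extract the three conclusions from first-order optimality together with explicit comparison of the achievable feature Gram matrices under the two architectural constraints $\mathcal{C}_\text{EX}$ and $\mathcal{C}_\text{DEQ}$. For $\mathcal{NC}1$, I would argue exactly as in the balanced case (Theorem \ref{theorem_nc_balance}): for fixed classifier $\boldsymbol W$ and fixed subsequent map ($\boldsymbol W_\text{EX}$ or $\boldsymbol W_\text{DEQ}$), the per-class loss $\sum_{i\in\pi(k)}\mathcal{L}(\boldsymbol W\tilde{\boldsymbol W}\boldsymbol h^0_{k,i},\boldsymbol y_k)$ depends on the $\boldsymbol h^0_{k,i}$ only through the linear images $\tilde{\boldsymbol W}\boldsymbol h^0_{k,i}$, and convexity of cross-entropy in its logit argument plus the averaged norm constraint in $\mathcal{C}$ forces all samples in a class to share a common image at the optimum — hence $\boldsymbol W_\text{EX}\boldsymbol h^0_{k,i}=\boldsymbol W_\text{EX}\boldsymbol h^0_k$ and likewise $f(\boldsymbol h^0_{k,i};\boldsymbol W_\text{DEQ})=f(\boldsymbol h^0_k;\boldsymbol W_\text{DEQ})$; this part is imbalance-agnostic because it is a per-class argument.

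For $\mathcal{NC}2$, the key observation is that once $\mathcal{NC}1$ holds the relevant object is the $K\times K$ Gram matrix of class-mean features, and the two architectures differ only in which Gram matrices are reachable: under $\mathcal{C}_\text{EX}$ the feature map is a single linear layer $\boldsymbol W_\text{EX}$ with $\|\boldsymbol W_\text{EX}\|_F\le E_H$, giving Gram $(\boldsymbol W_\text{EX}\boldsymbol H^0)^T(\boldsymbol W_\text{EX}\boldsymbol H^0)$, whereas under $\mathcal{C}_\text{DEQ}$ the linearized fixed point is $f(\boldsymbol H^0;\boldsymbol W_\text{DEQ})=\sum_{i\ge0}\boldsymbol W_\text{DEQ}^i\boldsymbol H^0=(\boldsymbol I-\boldsymbol W_\text{DEQ})^{-1}\boldsymbol H^0$ with $\|\boldsymbol W_\text{DEQ}\|_F\le E_H$. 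I would parametrize both by a perturbation of the identity acting on $\boldsymbol H^0$: write the EX map as $\boldsymbol I+\boldsymbol A$ and the DEQ map as $(\boldsymbol I-\boldsymbol B)^{-1}=\boldsymbol I+\boldsymbol B+\boldsymbol B^2+\cdots$ with $\|\boldsymbol A\|,\|\boldsymbol B\|\le E_H$, and then compare the distance to the target simplex ETF $\boldsymbol S$ entrywise. Writing the off-diagonal entries of the two Gram matrices as $\boldsymbol m_{ij}$ plus a first-order correction, the condition $E_H<2\boldsymbol S_{ij}-\boldsymbol m_{ij}<\tfrac1{1-E_H}$ is exactly the regime in which the Neumann-series map can move the Gram entry strictly closer to $\boldsymbol S_{ij}$ than the single-layer map can (the upper bound $\tfrac1{1-E_H}$ is the reachable correction from $(\boldsymbol I-\boldsymbol B)^{-1}$, while the lower bound $E_H$ caps what $\boldsymbol I+\boldsymbol A$ can do), so each squared-Frobenius term on the DEQ side is no larger and the inequality is strict once the condition holds for at least one pair $(i,j)$. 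This is where I expect to spend the most care: making the ``reachable correction'' argument rigorous requires controlling how the optimal $\boldsymbol W_\text{DEQ}$ and $\boldsymbol W_\text{EX}$ actually distribute their Frobenius budget across the entries of the Gram matrix, and the cleanest route is to lower-bound the EX distance by the best rank-/norm-constrained approximation and upper-bound the DEQ distance by exhibiting one feasible $\boldsymbol W_\text{DEQ}$ (e.g. a scalar multiple of a suitable projector) that already beats it under the stated inequality.

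For $\mathcal{NC}3$, I would again reduce to a two-variable comparison: after $\mathcal{NC}1$, the cosine $\cos(\tilde{\boldsymbol W}\boldsymbol h_k,\boldsymbol w_k)$ between class-$k$ feature and classifier is governed by the alignment of the feature map's action on $\boldsymbol h^0_k$ with $\boldsymbol w_k$, and the classifier $\boldsymbol W$ is common to both problems (same feasible set $\mathcal{C}$, same loss). Using the same $\boldsymbol I+\boldsymbol A$ versus $(\boldsymbol I-\boldsymbol B)^{-1}$ parametrization, expand the numerator $\boldsymbol w_k^T\tilde{\boldsymbol W}\boldsymbol h^0_k$ and the denominator $\|\tilde{\boldsymbol W}\boldsymbol h^0_k\|$ to the order needed, and form the ratio $\cos(f(\boldsymbol h_k;\boldsymbol W_\text{DEQ}),\boldsymbol w_k)/\cos(\boldsymbol W_\text{EX}\boldsymbol h_k,\boldsymbol w_k)$; the Neumann expansion contributes strictly more alignment mass in the direction of $\boldsymbol w_k$ per unit Frobenius norm than a single linear layer, and the scalar condition $\tfrac{E_H}{E_w+E_H}+E_H(1-E_H)<2$ is precisely what guarantees that the denominator does not grow fast enough to offset this gain, so the ratio exceeds $1$. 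The main obstacle here is the same as in $\mathcal{NC}2$ — pinning down the optimizers well enough that the comparison is not merely ``there exists a DEQ map that does better'' but holds at the actual minimizers — which I would handle by noting that at the minimum the loss is decreasing in $\cos(\tilde{\boldsymbol W}\boldsymbol h_k,\boldsymbol w_k)$ (for fixed norms set by the active constraints), so whichever architecture can reach a larger cosine under its budget will do so at optimality, and the displayed scalar inequalities are exactly the certificates that DEQ's budget reaches strictly further. Full details, including the explicit feasible DEQ maps used for the upper bounds, are deferred to Appendix \ref{Supp_sec A}.
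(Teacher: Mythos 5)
Your treatment of $\mathcal{NC}1$ is essentially the paper's: the collapse of within-class features follows from the equality conditions of the Jensen/mean-inequality chain applied per class, and this part is indeed imbalance-agnostic. The high-level quantities you identify for the comparison are also the right ones — the explicit map is budgeted by $\|\boldsymbol W_\text{EX}\|_F\le E_H$ while the linear DEQ acts as $(\boldsymbol I-\boldsymbol W_\text{DEQ})^{-1}$ with $\|(\boldsymbol I-\boldsymbol W_\text{DEQ})^{-1}\|$ controlled by $\tfrac{1}{1-E_H}$, and these two numbers are exactly what appear in the stated conditions.

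However, for $\mathcal{NC}2$ and $\mathcal{NC}3$ there is a genuine gap: the two displayed conditions are asserted to be ``exactly the regime'' in which DEQ wins, but nothing in the proposal derives them. The paper's route is concrete and different from your reachability/perturbation framing. It first establishes, from the equality conditions of the lower-bounding chain, the stationarity relation $C_4\boldsymbol w_A+C_5\boldsymbol w_B-\boldsymbol w_k=C_6\boldsymbol h_k$ at the optimum; it then bounds the resulting Gram entries as $C_6(\boldsymbol h^0_{k'})^T\boldsymbol h_k\le E_H+\tfrac12\boldsymbol M$ in the explicit case versus $\le\tfrac{1}{2(1-E_H)}+\tfrac12\boldsymbol M$ in the DEQ case, and finally runs a four-way sign case analysis on $\tfrac{1}{2(1-E_H)}+\tfrac12\boldsymbol m-\boldsymbol s$ versus $\tfrac12 E_H+\tfrac12\boldsymbol m-\boldsymbol s$ to extract the interval $E_H<2\boldsymbol S_{ij}-\boldsymbol m_{ij}<\tfrac{1}{1-E_H}$. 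For $\mathcal{NC}3$ it likewise lower-bounds each cosine by an AM--GM step, obtaining $\tfrac{2E_H(\boldsymbol h_k^0)^T\boldsymbol w_k}{E_W+E_H}$ and $\tfrac{4E_H(\boldsymbol h_k^0)^T\boldsymbol w_k}{1+2(E_W+E_H)(1-E_H)}$, whose ratio exceeds $1$ precisely under the stated scalar condition. Your proposed alternative — lower-bounding the explicit distance by a best-approximation argument and upper-bounding the DEQ distance by exhibiting a feasible $\boldsymbol W_\text{DEQ}$ — compares what each architecture \emph{could} reach rather than what the two minimizers actually do, and you correctly flag this as the obstacle without resolving it; a first-order expansion in $\boldsymbol A$ and $\boldsymbol B$ also cannot produce the non-perturbative quantity $\tfrac{1}{1-E_H}$ that the full Neumann series contributes. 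Additionally, your parametrization of the explicit map as $\boldsymbol I+\boldsymbol A$ does not match the constraint set in (\ref{opt_exp}), where the budget $E_H$ applies to $\boldsymbol W_\text{EX}$ itself rather than to a perturbation of the identity; keeping that convention is what makes the $E_H$-versus-$\tfrac{1}{1-E_H}$ comparison come out as stated.
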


The detailed proof is in Appendix Section \ref{supp_imb}.

Besides, the conclusion regarding the loss function is quite similar to that of Theorem \ref{thm_balance} under balanced settings. As analyzed in (\ref{fc_minhk}) and (\ref{deq_minhk}) in the Appendix, the lower bound of the loss function in DEQ is still lower than that in explicit neural network, where the performance of learned features is more evident in Figure \ref{fig_tsne}, where we use t-SNE \cite{vandermaaten08a} and Gram matrix of features to describe the performance of two models. Although the phenomenon of $\mathcal{NC}2$ and $\mathcal{NC}3$ does not exist, we have discovered in Theorem \ref{thm_imbalance} that under mild conditions, DEQ is superior in terms of $\mathcal{NC}$ compared to explicit neural network. Notably, the conditions are easy to satisfy since $E_H$ is generally very small in practice.

\begin{figure}[htbp]
\centering  
\subfigure[t-SNE results.]{
\includegraphics[width=0.45\textwidth]{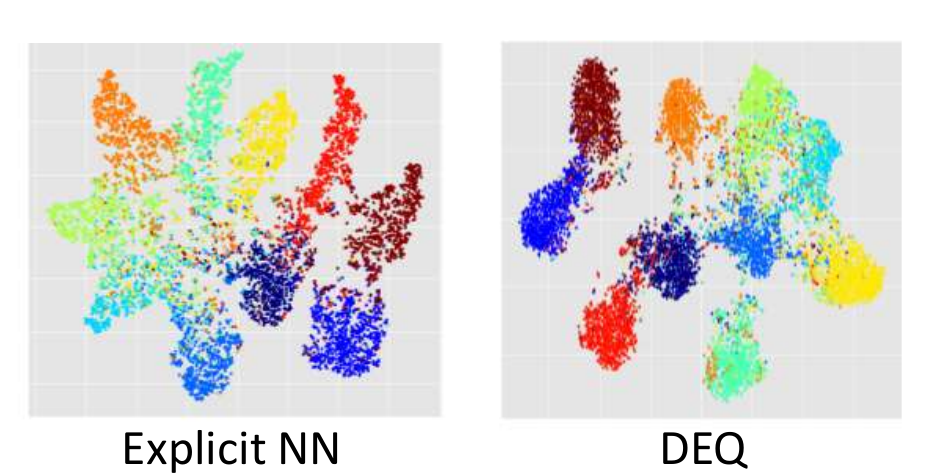}}
\hspace{10pt}
\subfigure[Visualization of the Gram matrix $\boldsymbol{H}\boldsymbol{H}^T$.]{
\includegraphics[width=0.45\textwidth]{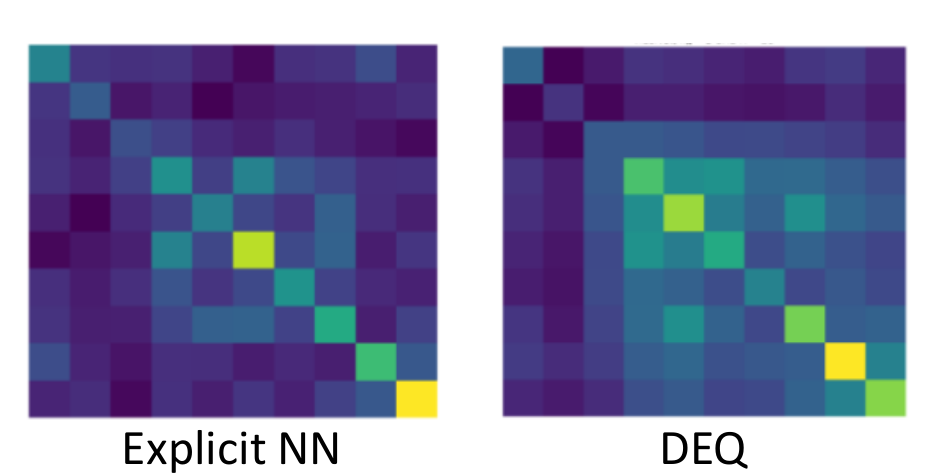}}
\caption{Under the imbalanced setting for CIFAR-10 with $K_A=3$ and $R=10$, the disparity in the learned features between Explicit Neural Networks (left) and DEQ (right).}
\label{fig_tsne}
\end{figure}

A crucial insight is that since DEQ undergoes multiple rounds of parameter adjustments for learning, it can be viewed as having an infinite number of layers, thus possessing greater representational capacity. As the network deepens, the iterative process of forward fixed-point may not necessarily reach the lowest threshold. Therefore, DEQ exhibits a certain degree of generalization for features in the minority class. Given the substantial feature differences among classes under an imbalanced dataset, the learned features by DEQ may demonstrate better adaptability to unseen categories. Consequently, compared to explicit neural network, DEQ tends to enhance performance.

{

Besides, due to the repeated iterations in solving the fixed-point iteration for some samples in the minority class with a small sample size, the model somewhat engages in multiple learning iterations for the features of samples in this class. This mitigates the impact of imbalanced samples to some extent. However, despite some improvements compared to the explicit neural network, DEQ still faces the issue of minority collapse. This conclusion is further validated in our subsequent experiments. Besides, to further discuss the situation of the dataset in terms of the degree of imbalance, we derived the following proposition:

\begin{proposition}
    \label{prop_1}
    Denote $R={K_An_A}/{N}$. When the number of samples in the majority class becomes extremely large, i.e., $R\rightarrow 1$, the features of the two kinds of classes will become:

    Majority classes: 
    \begin{equation*}
    \begin{aligned}
        \boldsymbol W_\text{EX}\boldsymbol h^0_{k,i}&=\boldsymbol W_\text{EX}\boldsymbol h^0_{k},\\
         f(\boldsymbol h^0_{k,i}; \boldsymbol W_\text{DEQ}) &= f(\boldsymbol h^0_{k}; \boldsymbol W_\text{DEQ}),
    \end{aligned}
    \end{equation*}    
    where $1\leq k\leq K_A$ and $i\in\pi(k)$. Each feature collapses to $K_A$-Simplex ETF.
    
    Minority classes:     
    \begin{equation*}
    \begin{aligned}
        \boldsymbol w_k&=\boldsymbol 0,\\
        \boldsymbol W_\text{EX}\boldsymbol h^0_{k,i}&=f(\boldsymbol h^0_{k,i}; \boldsymbol W_\text{DEQ})=\boldsymbol 0,
    \end{aligned}  
    \end{equation*}
    where $K_A+1\leq k\leq K$ and $i\in\pi(k)$.
    
    Here, $\pi(k)$ refers to the samples that belong to the class $k$.
\end{proposition}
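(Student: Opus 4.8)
The idea is to treat $R\to 1$ as a degeneration of the objective in (\ref{imb_loss}) onto its majority component, and then read off the geometry from the balanced result, Theorem~\ref{theorem_nc_balance}, applied to $K_A$ classes. Write the objective of (\ref{imb_loss}) as a convex combination $R\,\mathcal{L}_A+(1-R)\,\mathcal{L}_B$, where $\mathcal{L}_A$ (resp. $\mathcal{L}_B$) is the averaged cross-entropy over the $K_A$ majority (resp. $K_B$ minority) classes and $R=K_An_A/N$, so that $1-R=K_Bn_B/N$. On the feasible set the constraints in $\mathcal{C}$, $\mathcal{C}_\text{EX}$ and $\mathcal{C}_\text{DEQ}$ bound $\|\boldsymbol W\|$, $\|\tilde{\boldsymbol W}\|_F$ and all feature norms, hence all logits $\boldsymbol W\boldsymbol W_\text{EX}\boldsymbol h^0_{k,i}$ and $\boldsymbol Wf(\boldsymbol h^0_{k,i};\boldsymbol W_\text{DEQ})$ lie in a fixed compact set and $\mathcal{L}_B$ is uniformly bounded there; therefore $(1-R)\mathcal{L}_B\to 0$ uniformly over the feasible set. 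By compactness of the relevant variables, every limit point of the minimizers of (\ref{imb_loss}) is a minimizer of the limiting program $\min\ \mathcal{L}_A$ subject to the same constraints, so it remains to solve that program and to check the limit is identified uniquely enough to yield the asymptotic ("become") statement.

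In $\mathcal{L}_A$ the minority final features $\boldsymbol W_\text{EX}\boldsymbol h^0_{k,i}$ and $f(\boldsymbol h^0_{k,i};\boldsymbol W_\text{DEQ})$ with $k>K_A$ do not appear at all, while the minority classifier rows $\boldsymbol w_k$ with $k>K_A$ enter only through the softmax denominators of the majority samples; yet both still occupy the shared budgets $\tfrac1K\sum_k\|\boldsymbol w_k\|^2\le E_W$ and $\tfrac1K\sum_k\tfrac1{n_k}\sum_i\|\boldsymbol h_{k,i}\|^2\le E_H$. For the minority features, setting the corresponding $\boldsymbol h^0_{k,i}$ to $\boldsymbol 0$ leaves $\mathcal{L}_A$ unchanged and strictly relaxes the feature-energy constraint, and since the optimal balanced value is strictly decreasing in its energy parameters, any minimizer must have $\boldsymbol W_\text{EX}\boldsymbol h^0_{k,i}=f(\boldsymbol h^0_{k,i};\boldsymbol W_\text{DEQ})=\boldsymbol 0$ for $k>K_A$. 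For the minority rows, note that $\mathcal{L}_A$ is convex in the minority block $\{\boldsymbol w_k\}_{k>K_A}$ (being a sum of log-sum-exp terms composed with linear forms), and at $\boldsymbol w_k=\boldsymbol 0$ its gradient equals a nonnegative-weighted sum of the collapsed majority class-means; once those means form a centred $K_A$-Simplex ETF they sum to zero and, by the permutation symmetry among minority indices, this makes $\boldsymbol w_k=\boldsymbol 0$ the minority-block optimum, which the energy-budget relaxation only reinforces. This is the second display of the proposition.

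With the minority block removed, the limiting program restricted to the majority indices is exactly the balanced layer-peeled problem (\ref{opt_exp}) / (\ref{DEQ_NC}) on $K_A$ classes with per-class count $n_A$ and enlarged budgets $E_W'=(K/K_A)E_W$, $E_H'=(K/K_A)E_H$ in $\mathcal{C}$. Applying Theorem~\ref{theorem_nc_balance} verbatim gives $\mathcal{NC}1$ for the majority classes, i.e. $\boldsymbol W_\text{EX}\boldsymbol h^0_{k,i}=\boldsymbol W_\text{EX}\boldsymbol h^0_k$ and $f(\boldsymbol h^0_{k,i};\boldsymbol W_\text{DEQ})=f(\boldsymbol h^0_k;\boldsymbol W_\text{DEQ})$ for $k\le K_A$, $i\in\pi(k)$, together with the fact that the majority class-means land on the vertices of a $K_A$-Simplex ETF, the DEQ realization of this configuration being obtained by the same linear fixed-point construction as in that theorem. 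This is the first display. Finally, for $R\to 1$ the uniform convergence of the objectives together with uniqueness (up to the rotational symmetry of the ETF) of the Gram structure of the limiting minimizer promotes the subsequential statement to genuine convergence, which is the asserted asymptotic behaviour.

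\textbf{Main obstacle.} The delicate point is the self-consistent treatment of the second and third paragraphs: the argument that the minority rows vanish uses that the optimal majority block is the symmetric ETF, while the argument that the majority block is the symmetric ETF uses that the minority block is zero, so one must argue that the two sets of optimality conditions hold simultaneously at a global minimizer (or exploit the problem's symmetry directly), and one must rule out the alternative of minority rows merely collapsing onto a common nonzero direction in the orthogonal complement of the majority span, which does occur at finite imbalance and requires the strict monotonicity of the optimal balanced value in $E_W,E_H$ plus care in the case $D=K_A$ versus $D>K_A$. The compactness/uniform-convergence step also needs to be stated carefully, since the sublevel sets of the cross-entropy are noncompact without the norm constraints. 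Everything else is either a direct appeal to Theorem~\ref{theorem_nc_balance} or routine bookkeeping of the energy budgets.
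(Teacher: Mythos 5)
Your route is genuinely different from the paper's. The paper never argues by passing to a limiting optimization problem; instead it derives a single lower bound on the full imbalanced objective (\ref{imb_loss}) via a three-way weighted Jensen split of the softmax denominator (majority target, other majority, minority), obtains joint stationarity conditions (\ref{condition_major}) and (\ref{condition_minor}) for \emph{all} rows $\boldsymbol w_k$ from that one bound, and then reads off the proposition by letting the mixing coefficients degenerate ($C_4\to1$, $C_5\to 0$), so that the majority rows decouple from $\boldsymbol w_B$ and the minority conditions force $\boldsymbol w_k$ and the associated features to vanish. Your approach — uniform convergence of $R\,\mathcal{L}_A+(1-R)\mathcal{L}_B$ to $\mathcal{L}_A$ on the compact feasible set, zeroing the minority features to free energy budget, a convexity-plus-symmetry argument for the minority rows, and then invoking Theorem \ref{theorem_nc_balance} on the reduced $K_A$-class balanced problem with budgets $(K/K_A)E_W$, $(K/K_A)E_H$ — is cleaner in its reduction to the balanced theorem and makes the compactness hypotheses explicit, which the paper leaves implicit. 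What the paper's approach buys, however, is exactly an escape from the circularity you flag as your main obstacle: because both the majority-ETF structure and the vanishing of the minority block are extracted as simultaneous equality conditions of one global lower bound (\ref{lower_loss}), there is no need to alternate between the two blocks. In your scheme that gap is real and not cosmetic — the statement that $\boldsymbol w_k=\boldsymbol 0$ is minority-block optimal uses $\sum_{k\le K_A}\boldsymbol h_k=\boldsymbol 0$ and the equality of the softmax denominators across majority classes, both of which presuppose the symmetric ETF you have not yet established — so to close it you would either need to verify that your candidate configuration attains an independently derived lower bound on $\min\mathcal{L}_A$ (which is essentially reconstructing the paper's computation), or run a symmetrization argument over permutations of the majority indices. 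With that caveat, your argument is at least as detailed as the paper's own justification, which in the main text amounts to one sentence pointing at (\ref{condition_major})--(\ref{condition_minor}); note also that the paper's limiting claim about $1/(1-C_4^2)$ as $C_4\to1$ is stated loosely, so neither treatment of the limit is fully airtight as written.
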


This situation is equivalent to having a balanced dataset in the majority class, while the minority class, due to its extremely small sample size, contributes almost nothing. In such an extreme scenario, the $\mathcal{NC}$ performance of DEQ and the fully connected layer is nearly indistinguishable similar to Theorem \ref{theorem_nc_balance}. Both collapse on majority classes, resulting in a lack of learning features from minority classes meeting the results of (\ref{condition_major}) and (\ref{condition_minor}). This aligns with the findings in \cite{fang2021exploring}, where they provide more specific bounds on the ratio $K_A/K_B$ in their Theorem 5.

\section{Experiments}
\begin{figure*}
\label{fig}
  \centering
  \subfigure[Balanced dataset]{\label{fig_balance}\includegraphics[width=0.49\textwidth]{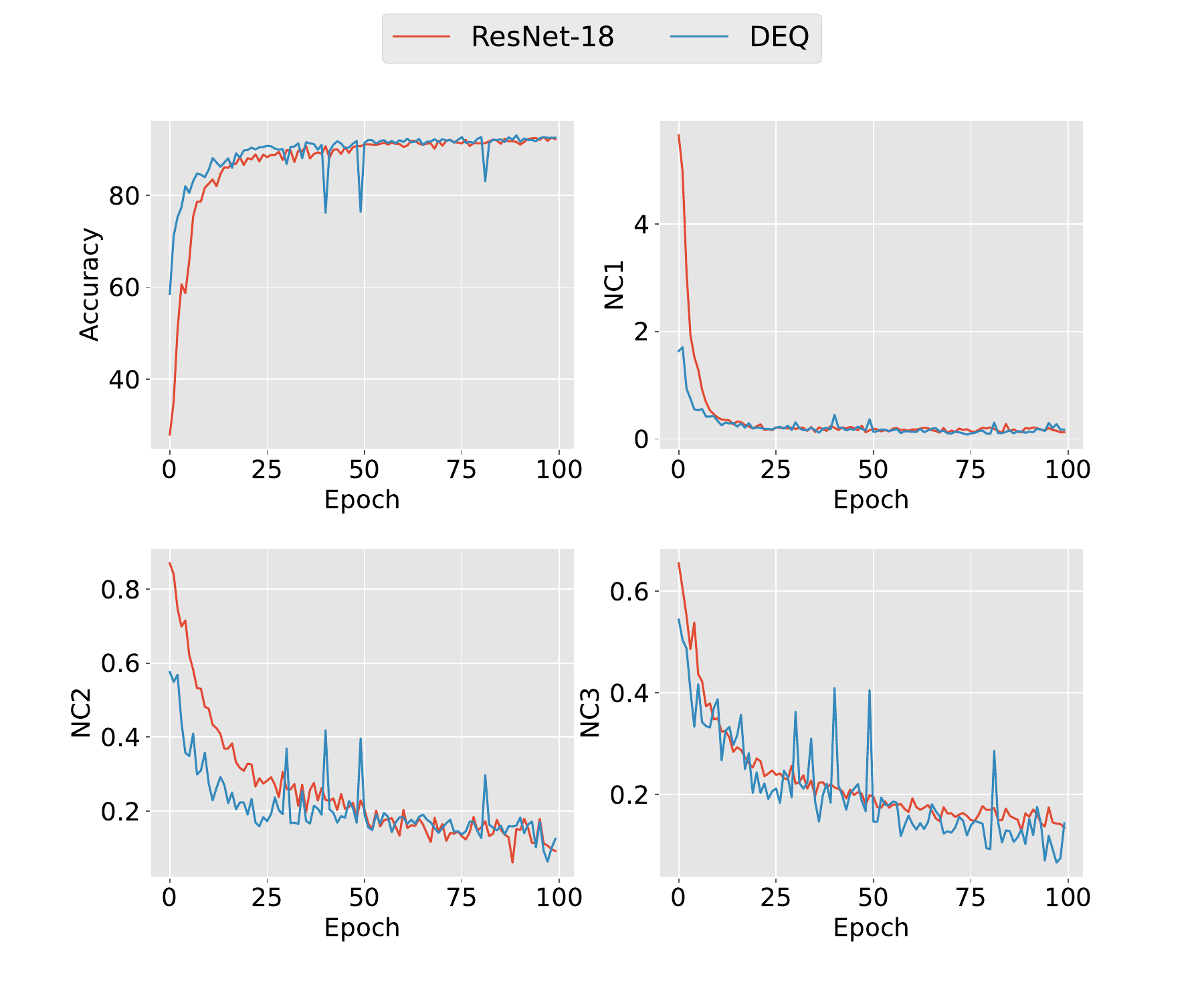}}
  \subfigure[Imbalanced dataset with $K_A=K_B=5$, $R=100$]{\label{fig_imbalance}\includegraphics[width=0.49\textwidth]{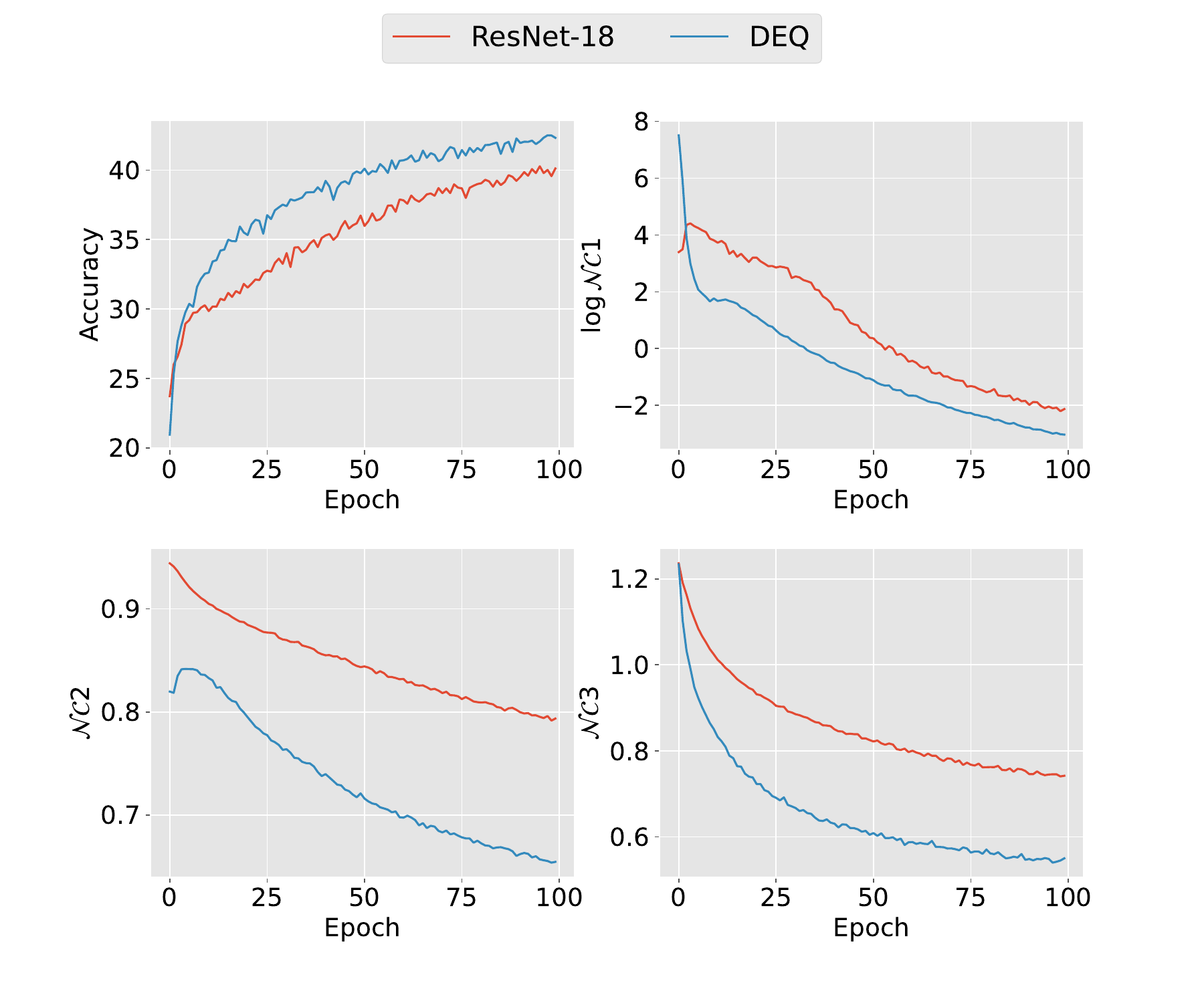}}

  \caption{Comparison of accuracy and $\mathcal{NC}$ phenomenon in training Cifar-10 dataset}
  \vspace{-10pt}
  
\end{figure*}

In this section, we empirically conducted experiments to validate the correctness of the proposed theorems. Initially, we implemented DEQ on a balanced dataset and compared its $\mathcal{NC}$ performance with that of ResNet. Subsequently, for imbalanced datasets, we tested varying degrees of imbalance by manipulating the quantities of $n_A$ and $n_B$, as well as $K_A$ and $K_B$. The experimental results showed that, on imbalanced datasets, DEQ outperformed Explicit Neural Networks. This finding is consistent with the results reported in \cite{bai2020multiscale}. All experiments were implemented using PyTorch on NVIDIA Tesla A40 48GB.

\subsection{Experiment setup}
Without loss of generality, since any traditional neural network can be formulated as a DEQ, we use ResNet18 \cite{he2016deep} as the backbone architecture here. As discussed earlier, to utilize the fixed point $\boldsymbol z^\star$ learned by DEQ as the extracted feature, we formulate the last ResNet block into a DEQ format, while maintaining the remaining structure identical to ResNet. As mentioned in \cite{bai2021stabilizing}, training with DEQ can lead to instability issues. This is especially noticeable as training progresses, where some samples struggle to converge to a fixed point. To address this, in accordance with their setting, we implement the solver with a threshold $\epsilon$ set to $10^{-3}$ and introduce an early stopping mechanism. If convergence is not achieved within $T > 20$ iterations, we terminate the fixed-point iteration. Additionally, when facing problematic samples during fixed-point solving, we skip them to ensure training stability.
During training, we set the learning rate to $1\times10^{-4}$ and utilize stochastic gradient descent with a momentum of $0.9$ and weight decay of $5\times 10^{-4}$. Both $E_W$ and $E_H$ are set to $0.01$. The training phase for each network consists of 100 epochs, with a batch size of $128$. In this context, accuracy is assessed by averaging the results from the last 10 epochs and computing their standard deviation.

\subsection{Performance under balanced conditions}
\label{5.2}
\begin{wrapfigure}[8]{r}{0.55\textwidth}\vspace{1pt}
\begin{minipage}{0.55\textwidth}\vspace{-20pt}
\begin{table}[H]
\renewcommand\arraystretch{1.2}
\caption{Comparison of accuracy under balanced settings of Cifar-10 and Cifar-100}
\centering
\label{tab_balance}
\begin{tabular}{c|cc}
\hline
            & Cifar-10       & Cifar-100      \\ \hline
Explicit NN & $93.05\pm0.17$ & $64.35\pm0.20$ \\ 
DEQ         & $93.23\pm0.13$ & $64.77\pm0.36$ \\ \hline
\end{tabular}
\end{table}
\end{minipage}
\end{wrapfigure}

By using the settings in (\ref{opt_exp}) and (\ref{DEQ_NC}), we compared the performance of DEQ and Explicit NN on Cifar-10 \cite{krizhevsky2010cifar} and Cifar-100 \cite{Krizhevsky09learningmultiple} for validation, as shown in Figure \ref{fig_balance}. Their $\mathcal{NC}$ performances remain comparable, i.e., DEQ achieves results similar to Explicit NN, corroborating the findings of Theorem \ref{theorem_nc_balance}. As for accuracy, from the results in the first column of Table \ref{tab_balance}, it can be observed that DEQ's accuracy is higher than that of the explicit layer, which aligns with Theorem \ref{thm_balance}. However, the increase is only marginal due to the fact that the coefficients $E_H$ and $E_W$ act as scaling factors. Therefore, compared to explicit neural network, DEQ finds it challenging to achieve a significantly lower loss and, consequently, a substantial improvement. Moreover, Explicit NN performs well in fitting balanced datasets, so the accuracy of DEQ does not experience a significant boost in this context.

Here, we manually set the number of epochs to $100$ to avoid potential instability issues with DEQ as training deepens. This is because DEQ can be challenging to reach the TPT (Terminal Phase of Training). As the number of parameters increases, achieving fixed-point convergence becomes more difficult, and even parameter explosion may occur. Under the current vanilla design, it is challenging to avoid such instability. Therefore, for a fair comparison, we apply the same training settings to both the implicit DEQ and the explicit neural network. The results in Figure \ref{fig} indicate that the test performance at 100 epochs is not significantly different from that at TPT. Since DEQ shares the same backbone as the corresponding explicit neural network, it can still demonstrate better $\mathcal{NC}$ performance after reaching TPT in these cases.

\begin{table*}[htbp]

\renewcommand\arraystretch{1.2}
\caption{Test Accuracy on Cifar-10 and Cifar-100 Dataset with $K_A=3$}
\vspace{5pt}
\centering
\resizebox{\textwidth}{!}{
\begin{tabular}{c|c|ccc|ccc}
\hline
                             &          & \multicolumn{3}{c|}{Cifar-10}                    & \multicolumn{3}{c}{Cifar-100}                    \\ \cline{2-8} 
                             & $R$        & 10             & 50             & 100            & 10             & 50             & 100            \\ \hline
\multirow{3}{*}{Explicit NN} & overall  & 72.57$\pm$0.25 & 44.32$\pm$0.23 & 32.14$\pm$0.81 & 41.41$\pm$0.56 & 28.18$\pm$0.42 & 23.43$\pm$0.92 \\
                             & majority & 96.40$\pm$0.32 & 96.80$\pm$0.29 & 91.67$\pm$0.61 & 73.03$\pm$0.62 & 74.53$\pm$0.55 & 73.46$\pm$0.56 \\
                             & minority & 62.36$\pm$0.12 & 21.83$\pm$0.20 & 6.64$\pm$0.99  & 27.86$\pm$0.39 & 8.31$\pm$0.38  & 1.99$\pm$1.06  \\ \hline
\multirow{3}{*}{DEQ}         & overall  & 73.84$\pm$0.72 & 46.08$\pm$1.06 & 34.18$\pm$1.28 & 43.72$\pm$0.60 & 30.46$\pm$1.27 & 24.78$\pm$1.93 \\
                             & majority & 96.68$\pm$0.87 & 96.63$\pm$0.98 & 93.33$\pm$1.36 & 74.16$\pm$0.82 & 73.63$\pm$0.95 & 74.89$\pm$0.88 \\
                             & minority & 64.06$\pm$0.66 & 24.42$\pm$1.32 & 8.83$\pm$1.08  & 30.67$\pm$0.53 & 11.96$\pm$1.66 & 3.31$\pm$2.45  \\ \hline
\end{tabular}
}
\label{tab_acc100}
\end{table*}

\begin{table*}[htbp]

\renewcommand\arraystretch{1.2}
\caption{Test Accuracy on Cifar-100 Dataset with $K_A=3$}
\vspace{5pt}
\centering
\resizebox{\textwidth}{!}{
\begin{tabular}{c|c|ccc}
\hline
                             &          & \multicolumn{3}{c}{Cifar-100}                    \\ \cline{2-5} 
                             & $R$        & 10             & 50             & 100            \\ \hline
\multirow{3}{*}{Explicit NN} & overall  & 41.41$\pm$0.56 & 28.18$\pm$0.42 & 23.43$\pm$0.92 \\
                             & majority & 73.03$\pm$0.62 & 74.53$\pm$0.55 & 73.46$\pm$0.56 \\
                             & minority & 27.86$\pm$0.39 & 8.31$\pm$0.38  & 1.99$\pm$1.06  \\ \hline
\multirow{3}{*}{DEQ}         & overall  & 43.72$\pm$0.60 & 30.46$\pm$1.27 & 24.78$\pm$1.93 \\
                             & majority & 74.16$\pm$0.82 & 73.63$\pm$0.95 & 74.89$\pm$0.88 \\
                             & minority & 30.67$\pm$0.53 & 11.96$\pm$1.66 & 3.31$\pm$2.45  \\ \hline
\end{tabular}
}
\label{tab_acc100}
\end{table*}

\subsection{Performance under imbalanced conditions}

We conducted experiments with varying configurations with different numbers of majority and minority classes and imbalance degrees. Assume the numbers of majority and minority classes are $(K_A,K_B)$ with corresponding sample sizes $(n_A,n_B)$, the imbalance degree is denoted as $R=n_A/n_B$.We considered different setups for majority and minority class quantities, such as $(3,7)$, $(5,5)$, and $(7,3)$. Additionally, we varied the ratio of sample quantities $R$ between majority and minority classes with values of $10$, $50$ and $100$. We also tested the phenomenon of $\mathcal{NC}$ and accuracy on the Cifar-10 and Cifar-100 datasets, which own a total of $5000$ images for each class. Specifically, when $R=100$ and $(K_A,K_B)=(3,7)$ for Cifar-10, the number of samples for all classes is $(5000,5000,5000,50,50,50,50,50,50,50)$.

The results for $(K_A,K_B)=(3,7)$ are shown in Table \ref{tab_acc100}, where the test dataset owns the same distribution as the training dataset. We use ``overall", ``majority", and ``minority" to represent the results across all categories, the majority class, and the minority class, respectively.
We contrasted the difference in the training outcomes between the Explicit Neural Network and DEQ, and the superior performance of DEQ compared to Explicit Neural Network confirms DEQ's higher learning potential. This suggests that DEQ can achieve a lower bound on its loss function. The experimental results indicate that DEQ consistently outperforms explicit neural network in accuracy during imbalanced training, aligning with Theorem \ref{thm_imbalance}. Specifically, we present the outcomes for $(K_A,K_B)=(5,5)$ with $R=100$ are depicted in Figure \ref{fig_imbalance}. The results strongly corroborate Theorem \ref{thm_imbalance}, affirming DEQ exhibits the same $\mathcal{NC}1$ phenomenon as an explicit neural network under these conditions. However, DEQ outperforms the explicit neural network in terms of $\mathcal{NC}2$ and $\mathcal{NC}3$. Additional experimental results with different parameters are detailed in Appendix Section \ref{supp_experiment}.

In addition to the stability considerations discussed in Section \ref{5.2}, we refrain from training for an extensive number of epochs due to the imbalance in the samples of the training set. This is because excessive learning rounds might cause the network parameters to predominantly capture information from the majority class, resulting in overfitting its features. This, in turn, diminishes the generalization of learning features from other classes, leading to marginal improvements in accuracy on the test set. As depicted in Figure \ref{fig_imbalance}, the model has already converged at this point. Moreover, limiting the number of training epochs helps to avoid the gradual instability in the learning process of DEQ.

\section{Conclusion}

{\color{black}
In this study, we have systematically analyzed the representation of Deep Equilibrium Models (DEQ) and explicit neural networks under both balanced and imbalanced conditions using the phenomenon of Neural Collapse ($\mathcal{NC}$). Our theoretical analysis demonstrated that $\mathcal{NC}$ is present in DEQ under balanced conditions. Furthermore, in imbalanced settings, DEQ exhibited notable advantages over explicit neural networks, such as the convergence of extracted features to the vertices of a simplex equiangular tight frame and self-duality properties under mild conditions. These findings highlight the superior performance of DEQ in handling imbalanced datasets. Our experimental results in both balanced and imbalanced scenarios validate the theoretical insights. The current analysis is limited to simple imbalanced scenarios and the linear structure of DEQ models. Future work will expand on this foundation by exploring more general imbalanced scenarios and extending the analysis to more complex forms of DEQ models. 

\section*{Acknowledgement}
This work was supported by NSFC (No.62303319), Shanghai Sailing Program (22YF1428800), Shanghai Local College Capacity Building Program (23010503100), ShanghaiTech AI4S Initiative SHTAI4S202404, Shanghai Frontiers Science Center of Human-centered Artificial Intelligence (ShangHAI), MoE Key Laboratory of Intelligent Perception and Human-Machine Collaboration (ShanghaiTech University) and Shanghai Engineering Research Center of Intelligent Vision and Imaging.

\bibliographystyle{plain}
\bibliography{final}

\newpage

\appendix
\addtocontents{toc}{\protect\setcounter{tocdepth}{3}}
\renewcommand{\contentsname}{Appendix Contents}

\tableofcontents  

\section{Evaluation metrics of \texorpdfstring{$\mathcal{NC}$}{NC}} 

Followed by the settings of \cite{tirer2022extended} and \cite{dang2023neural}, the measurement of $\mathcal{NC}$ are set as follow:

Let $\boldsymbol h_k\triangleq \frac{1}{n_k}\sum_{i=1}^{n_k} \boldsymbol h_{k,i}$ represent the average of all features within class $k$ and these $K$ classes collectively constitute the average matrix $\bar{\boldsymbol H}=\left[\boldsymbol h_1,\cdots,\boldsymbol h_K\right]$. Besides, The global average is defined as $\boldsymbol h_G\triangleq \frac{1}{K}\sum_{i=1}^{K} \boldsymbol h_{k}$. Subsequently, the within-class and between-class covariances can be calculated as:
\begin{equation}
    \begin{aligned}
        \boldsymbol \Sigma_W&\triangleq\frac{1}{N}\sum\limits_{k=1}^K\sum\limits_{i=1}^{n_k}(\boldsymbol h_{k,i}-\boldsymbol h_k)(\boldsymbol h_{k,i}-\boldsymbol h_k)^T,\\
        \boldsymbol \Sigma_B&\triangleq\frac{1}{K}\sum\limits_{k=1}^K(\boldsymbol h_k-\bar{\boldsymbol h}_G)(\boldsymbol h_k-\bar{\boldsymbol h}_G)^T.
    \end{aligned}
\end{equation}

$\mathcal{NC}1$ measures the variation of features with-in the same class:
\begin{equation}
    \mathcal{NC}1=\frac{1}{K}\text{tr}\left(\boldsymbol \Sigma_W\boldsymbol \Sigma_B^\dagger\right),
\end{equation}
where $\boldsymbol \Sigma_B^\dagger$ denotes the pseudo-inverse of $\boldsymbol \Sigma_B$.

$\mathcal{NC}2$ measures similarity between the mean of learned last-layer features $\bar{\boldsymbol H}$ and the structure of Simplex ETF:

\begin{equation}
    \mathcal{NC}2=\left\|\frac{\bar{\boldsymbol H}^T\bar{\boldsymbol H}}{\|\bar{\boldsymbol H}^T\bar{\boldsymbol H}\|_F}-\frac{1}{K-1}(\boldsymbol I_K-\frac{1}K \boldsymbol 1_K\boldsymbol 1_K^T)\right\|_F.
\end{equation}

$\mathcal{NC}3$ measures similarity of the last-layer feature $\bar{\boldsymbol H}$ and weights of classifier $\boldsymbol W$:
\begin{equation}
    \mathcal{NC}3=\left\|\frac{\boldsymbol W}{\|\boldsymbol W\|_F}-\frac{\bar{\boldsymbol H}}{\|\bar{\boldsymbol H}\|_F}\right\|.
\end{equation}
Additionally, it is worth noting that all above $\mathcal{NC}$ criteria are exclusively based on the training set. This is because our focus is solely on analyzing learning performance on imbalanced datasets, and generalization is not a primary concern.

\section{Proof under balanced setting}
\label{Supp_sec A}
\subsection{Problem definition}
As different layers in the neural network introduce complexity, the optimization problem is non-convex, and KKT conditions do not guarantee global optimality. Therefore, we consider applying inequality relaxation to the joint optimization problem, obtaining a lower bound for the loss function. By determining the conditions under which the equality holds, we can derive the requirements for the $\mathcal{NC}$ phenomenon. This analysis assumes a balanced setting, where all $\#\pi(k) = n_1 = n_2 = \cdots = n_K = N/K$.

We considered the fully connected layers (explicit) and Deep Equilibrium Models (implicit) under the balanced settings respectively, and then derived the detailed proof.

\begin{figure}[htbp]
  \begin{minipage}{0.5\textwidth}
  \textbf{(Fully Connected Layers)}
    \begin{equation*}
        \begin{aligned}
            \min\limits_{\boldsymbol W, \boldsymbol W_\text{EX}, \boldsymbol H} ~~~~&\frac{1}N\sum\limits_{k=1}^K\sum\limits_{i=1}^{n_k} \mathcal{L}({\boldsymbol W} \boldsymbol W_\text{EX}\boldsymbol h^0_{k,i},{\boldsymbol y}_k)\\
            \text{s.t.}~~~~&\boldsymbol h_{k,i}=\boldsymbol W_\text{EX}\boldsymbol h^0_{k,i},\\
            &\|\boldsymbol W_\text{EX}\|_F\leq E_H,\\
            &\frac{1}K \sum\limits_{k=1}^K \|\boldsymbol w_k\|^2\leq E_W,\\
            &\frac{1}K \sum\limits_{k=1}^K \frac{1}{n_k} \sum\limits_{i=1}^{n_k} \| \boldsymbol h_{k,i}\|^2\leq E_H,
        \end{aligned}
    \end{equation*}
  \end{minipage}%
  \begin{minipage}{0.5\textwidth}
    \textbf{(Deep Equilibrium Models)}
    {\begin{equation*}
        \begin{aligned}
            \min\limits_{\boldsymbol W,\boldsymbol W_\text{DEQ}, \boldsymbol z_{k,i}^\star} ~~~~&\frac{1}N\sum\limits_{k=1}^K\sum\limits_{i=1}^{n_k} \mathcal{L}({\boldsymbol W}\boldsymbol z^\star,{\boldsymbol y}_k)\\
            \text{s.t.}~~~~&\boldsymbol z_{k,i}^\star=f(\boldsymbol h^0_{k,i}; \boldsymbol W_\text{DEQ}),\\
            & \|\boldsymbol W_\text{DEQ}\|_F\leq E_H,\\
            &\frac{1}K \sum\limits_{k=1}^K \|\boldsymbol w_k\|^2\leq E_W,\\
            &\frac{1}K \sum\limits_{k=1}^K \frac{1}{n_k} \sum\limits_{i=1}^{n_k} \| \boldsymbol z^\star_{k,i}\|^2\leq E_H.
        \end{aligned}
    \end{equation*}}
  \end{minipage}
   Note that here $n_1=n_2=\cdots=n_k=n$, and $f$ represents the form of Linear DEQ, where we will use $f(\boldsymbol x;\boldsymbol W_\text{DEQ})=\sum\limits_{i=1}^{\infty}\boldsymbol W^i_\text{DEQ}\boldsymbol x$ for representation in the following proofs.
\end{figure}

In a classification task, cross-entropy loss $\mathcal{L}({\boldsymbol W}\boldsymbol h_{k,i},{\boldsymbol y}_k)$ is regarded as the final loss function. Drawing inspiration from \cite{fang2021exploring}, our initial efforts revolve around organizing and simplifying the log function to distinguish the logit in class $k$ from other classes.

First consider the following lemma:
\begin{lemma}
\label{lemma_log}
Let there be $K$ variables $\delta_1, \delta_2, \cdots, \delta_K$, and the logit of each variable $\delta_k$ satisfies the inequality:
\begin{equation}
    \log\left(\delta_k/\sum\limits_{k=1}^k\delta_k\right)\leq M_1\left(\log\delta_k-\frac{1}{K-1}\sum\limits_{k'\neq k}^K\log\delta_k\right)+M_2,
\end{equation}
where $M_1$ and $M_2$ are predefined constants.
\end{lemma}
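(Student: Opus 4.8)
The plan is to prove Lemma~\ref{lemma_log} by a direct, elementary estimate on the cross-entropy logit, splitting the log-sum into the ``own-class'' term and the ``other-class'' terms and then balancing them. First I would rewrite the left-hand side as
\begin{equation*}
\log\!\left(\delta_k\Big/\sum_{k'=1}^K\delta_{k'}\right) = -\log\!\left(1+\sum_{k'\neq k}\frac{\delta_{k'}}{\delta_k}\right),
\end{equation*}
so that the quantity to be bounded above is the negative log of $1+\sum_{k'\neq k}e^{\,\log\delta_{k'}-\log\delta_k}$. Writing $u_k=\log\delta_k$ and $v=\frac{1}{K-1}\sum_{k'\neq k}u_{k'}$, the target inequality becomes $-\log\!\big(1+\sum_{k'\neq k}e^{u_{k'}-u_k}\big)\le M_1(u_k-v)+M_2$, i.e. an upper bound in terms of the ``gap'' $u_k-v$ between the own-class log-logit and the average of the others.

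The key steps, in order, are: (i) apply Jensen's inequality (convexity of $t\mapsto e^t$) to lower-bound $\sum_{k'\neq k}e^{u_{k'}-u_k}\ge (K-1)e^{\,v-u_k}$, which turns the bound into a one-variable statement in $s:=u_k-v$: it suffices to show $-\log(1+(K-1)e^{-s})\le M_1 s + M_2$; (ii) observe that the function $\varphi(s)=-\log(1+(K-1)e^{-s})$ is increasing, concave, with $\varphi(s)\to 0$ as $s\to\infty$ and $\varphi(s)\sim s-\log(K-1)$ as $s\to-\infty$, so it lies below any of its tangent lines; (iii) choose the tangent (or a convenient secant/affine majorant) at a reference point to read off admissible constants, e.g. $M_1=\varphi'(s_0)$ and $M_2=\varphi(s_0)-s_0\varphi'(s_0)$, or simply take $M_1$ to be the slope and $M_2$ the intercept of a supporting line valid on the relevant range of $s$; (iv) note $\varphi'(s)=\frac{(K-1)e^{-s}}{1+(K-1)e^{-s}}\in(0,1)$, so any $M_1\in(0,1)$ is attainable with a corresponding finite $M_2$ depending only on $K$ and $M_1$, which is exactly the ``predefined constants'' phrasing in the statement. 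This yields the claimed inequality for every admissible $(M_1,M_2)$ pair, with equality governed by the Jensen step (all $u_{k'}$, $k'\neq k$, equal) and the tangency step ($s=s_0$); tracking these equality conditions is what later feeds the $\mathcal{NC}$ characterization.

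The main obstacle I anticipate is not any single estimate but pinning down the precise admissible range of $(M_1,M_2)$ and over which domain of the $\delta_k$ the inequality is asserted to hold: the bound $-\log(1+(K-1)e^{-s})\le M_1 s+M_2$ with a single affine function cannot hold for all $s\in\mathbb{R}$ unless $M_1\le 1$ (else it fails as $s\to\infty$) and, if $M_1<1$, it fails as $s\to-\infty$ unless we restrict $s$ to be bounded below — i.e. unless the logits are assumed comparable, which is natural at the terminal phase of training but must be stated. So the careful part is to state the lemma's hypotheses (boundedness of the log-logits, or $M_1$ close enough to $1$, or the domain restriction coming from the norm constraints in $\mathcal{C}$) so that a single supporting line suffices; once that is fixed, steps (i)--(iv) are routine convexity arguments. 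I would also double-check the degenerate cases $K=2$ and the behavior of the constant $\log(K-1)$, and confirm that the direction of the inequality is consistent with how the lemma is invoked (to lower-bound the loss $\mathcal{L}=-\log(\cdot)$, i.e. $\mathcal{L}\ge -M_1(u_k-v)-M_2$) in the subsequent proof of Theorem~\ref{theorem_nc_balance}.
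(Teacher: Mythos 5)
Your argument is correct and rests on the same engine as the paper's proof --- Jensen's inequality --- but you factor it differently. The paper applies concavity of $\log$ once to the convex combination $\frac{C_1}{C_1+C_2}\cdot\frac{(C_1+C_2)\delta_k}{C_1}+C_3\sum_{k'\neq k}\frac{\delta_{k'}}{C_3}$ with $C_3=\frac{C_2}{(K-1)(C_1+C_2)}$; this single weighted-Jensen step simultaneously collapses the $K-1$ other terms to their geometric mean and trades them off against the own-class term, yielding $M_1=\frac{C_2}{C_1+C_2}\in(0,1)$ and an explicit $M_2$. You decompose the same bound into two stages: uniform Jensen on the exponentials to reduce to the one-variable concave function $\varphi(s)=-\log\bigl(1+(K-1)e^{-s}\bigr)$, then a supporting line at $s_0$. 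Your tangent point plays exactly the role of the paper's ratio $C_2/C_1$ (compare Remark \ref{remark1}, which identifies the choice making the bound tight), and the equality conditions you track (all $\delta_{k'}$, $k'\neq k$, equal; $s=s_0$) are precisely the ones the paper later exploits for the $\mathcal{NC}$ characterization. Your one-variable reduction arguably makes the structure of the bound more transparent.

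However, the obstacle you anticipate in your last paragraph is not real, and you should not add a boundedness hypothesis on the logits. Since $\varphi$ is concave on all of $\mathbb{R}$ (you computed $\varphi'(s)=\frac{(K-1)e^{-s}}{1+(K-1)e^{-s}}$, which is strictly decreasing), its tangent line at any $s_0$ majorizes it globally. Concretely, for $M_1\in(0,1)$ the gap $M_1 s+M_2-\varphi(s)$ tends to $+\infty$ both as $s\to+\infty$ (where $\varphi\to 0$) and as $s\to-\infty$ (where $\varphi(s)\sim s-\log(K-1)$ and $(1-M_1)s\to-\infty$, so the affine function with the \emph{smaller} slope eventually lies \emph{above} the asymptote, not below it). Hence $\sup_s\bigl(\varphi(s)-M_1 s\bigr)$ is finite for every $M_1\in(0,1)$, and taking $M_2$ to be that supremum gives the lemma for all positive $\delta_1,\dots,\delta_K$ with no restriction; only $M_1>1$ (or $M_1<0$) would force a domain restriction. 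This unconditional validity matches the paper's proof, which likewise imposes no constraint beyond positivity of the $\delta_k$ and of the weights $C_1,C_2$.
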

\begin{proof}
Split the sum in the denominator and sequentially introduce weights for each term. Here, define $K$ coefficients such that their sum is 1. Therefore, we have:
\begin{equation}
    \frac{C_1}{C_1+C_2}+\underbrace{C_3+\cdots +C_3}_{K-1} = 1,
\end{equation}
that is $C_3 = \dfrac{C_2}{(K-1)(C_1+C_2)}$. Therefore, by Jensen's inequality, we can derive:
\begin{equation}
\label{ineq_1}
\begin{aligned}
    \log\left(\delta_k/\sum\limits_{k'=1}^{K}\delta_{k'}\right)&=\log \delta_k - \log\left(\sum\limits_{k'=1}^K\delta_{k'}\right)\\
    &=\log \delta_k-\log\left(\frac{C_1}{C_1+C_2}\frac{(C_1+C_2)\delta_k}{C_1}+C_3\sum\limits_{k'\neq k}^K\frac{\delta_{k'}}{C_3}\right)\\
    &\leq \log \delta_k-\frac{C_1}{C_1+C_2}\log\left(\frac{(C_1+C_2)\delta_k}{C_1}\right)-C_3\sum\limits_{k'\neq k}^K\log\frac{\delta_{k'}}{C_3}\\
    &=M_1\left(\log\delta_k-\frac{1}{K-1}\sum\limits_{k'\neq k}^K\log\delta_{k'}\right)+M_2,
\end{aligned}
\end{equation}
where $M_1=\dfrac{C_2}{C_1+C_2}$ and $M_2=\dfrac{C_2}{C_1+C_2}\log C_3-\dfrac{C_1}{C_1+C_2}\log\left(\dfrac{C_1+C_2}{C_1}\right)$. Therefore the lemma is proved.
\end{proof}

\begin{remark}
\label{remark1}
When $C_2/C_1=\dfrac{1}{K-1}\exp{\left(\log\delta_k-\dfrac{1}{K-1}\sum\limits_{k'\neq k}^K\log\delta_{k'}\right)}$, the right term of the inequality in lemma \ref{lemma_log} reaches its maximum.
\end{remark}
\begin{proof}
    Let $\mathcal{M} = M_1 \left(\log \delta_k - \dfrac{1}{K-1} \sum\limits_{k' \neq k}^K \log \delta_{k'}\right) + M_2$ in Lemma \ref{lemma_log}. Then, upon computing the derivatives of $C_1$ and $C_2$, we obtain:
    \begin{equation}
    \begin{aligned}
        \frac{\partial \mathcal{M}}{\partial C_1}&=\frac{1}{(C_1+C_2)^2}\left(-C_2\frac{\mathcal{M}-M_2}{M_1}+C_2\log\frac{(K-1)C_2}{C_1}\right)\\
        \frac{\partial \mathcal{M}}{\partial C_2}&=\frac{1}{(C_1+C_2)^2}\left(C_1\frac{\mathcal{M}-M_2}{M_1}-C_1\log\frac{(K-1)C_2}{C_1}\right).
    \end{aligned}
    \end{equation}
    Combining these two equations yields the conclusion.
\end{proof}

Next, we substitute the result of each logit into the lemma \ref{lemma_log}, from which we can derive:
\begin{equation}
\label{loss1}
\begin{aligned}
    \mathcal{L} &= -\frac{1}N\sum_{k=1}^K \sum_{i=1}^{n} \boldsymbol y_{k,i} \log\frac{ \exp(\boldsymbol w^T_{k,i} \boldsymbol h_i)} {\sum\limits_{k'=1}^K \exp(\boldsymbol w^T_{k',i} \boldsymbol h_i)}\\
    &\geq\frac{C_1}{(C_1+C_2)N(K-1)}\sum\limits_{i=1}^n\left[\left(\sum_{k=1}^K\boldsymbol h_{k,i}\right)^T\left(\sum_{k=1}^K\boldsymbol w_{k}\right)-K\sum_{k=1}^K \boldsymbol h^T_{k,i}\boldsymbol w_k\right]+C_4\\
    &=\frac{C_1K}{(C_1+C_2)N(K-1)}\sum\limits_{k=1}^K\sum\limits_{i=1}^{n}(\bar{\boldsymbol h}_i-\boldsymbol h_{k,i})^T\boldsymbol w_k+C_4\\
    &\geq\frac{C_1}{(C_1+C_2)N(K-1)}\left(-\frac{K}2\sum\limits_{k=1}^K\sum\limits_{i=1}^{n}\|\bar{\boldsymbol h}_i-\boldsymbol h_{k,i}\|^2/C_5-\frac{C_5N}2\sum\limits_{k=1}^K\|\boldsymbol w_k\|^2\right)+C_4,
\end{aligned}
\end{equation}
where the second inequality applies Mean Inequalities, and $C_4=\dfrac{C_2}{N(C_1+C_2)}\log C_3-\dfrac{C_1}{N(C_1+C_2)}\log\left(\dfrac{C_1+C_2}{C_1}\right)$. For convenience, we denote $\tilde{\mathcal{L}}=-\dfrac{K}2\sum\limits_{k=1}^K\sum\limits_{i=1}^{n}\|\bar{\boldsymbol h}_i-\boldsymbol h_{k,i}\|^2/C_5-\dfrac{C_5N}2\sum\limits_{k=1}^K\|\boldsymbol w_k\|^2$. As the corresponding constraints have already been added in (\ref{opt_prob}), specifically the constraint $\sum\limits_{k=1}^K\|\boldsymbol w_k\|^2\leq E_W$, our focus shifts to discussing the situation concerning the first term. Since it represents the features of the final layer, we separately explore the differences in its extraction when using DEQ and fully connected layers. First suppose the extracted feature by the backbone is $\boldsymbol h^0$.

\subsection{\texorpdfstring{$\mathcal{NC}$}{NC} analysis}
We separately discuss the representation of $\mathcal{NC}$ in the cases of Explicit NN and DEQ, and compare the lower bounds of the loss function.

\subsubsection{\texorpdfstring{$\mathcal{NC}$}{NC} proof in Explicit neural networks}
For convenience, we assume there is only one layer in the feature extractor, that is, $\boldsymbol h = \boldsymbol W_\text{EX} \boldsymbol h^0$, then the first term in $\tilde{\mathcal{L}}$ becomes:
\begin{equation}
\label{ineq2}
    \begin{aligned}
        -\frac{K}2\sum\limits_{k=1}^K\sum\limits_{i=1}^{n}\|\bar{\boldsymbol h}_i-\boldsymbol h_{k,i}\|^2 &=-\frac{K}2\sum\limits_{k=1}^K\sum\limits_{i=1}^{n}\|\boldsymbol W_\text{EX} (\bar{\boldsymbol h}^0_i-\boldsymbol h^0_{k,i})\|^2\\
        &\geq -\frac{K}4\sum\limits_{k=1}^K\sum\limits_{i=1}^{n}\left(\|\boldsymbol W_\text{EX}\|_F^2+\|\bar{\boldsymbol h}^0_i-\boldsymbol h^0_{k,i}\|^2\right).\\
    \end{aligned}
\end{equation}
Substituting them into the loss function (\ref{loss1}), we can observe that:
\begin{equation}
\label{loss2}
    \begin{aligned}
        \tilde{\mathcal{L}} &\geq -\frac{NK}{4C_5}\|\boldsymbol W_\text{EX}\|_F^2 - \frac{K}{4C_5}\sum\limits_{k=1}^K\sum\limits_{i=1}^{n}\|\bar{\boldsymbol h}^0_i-\boldsymbol h^0_{k,i}\|^2-\frac{C_5NK}2E_W\\
        &=-\frac{K^2}{4C_5}\sum\limits_{i=1}^{n}\frac{1}K\sum\limits_{k=1}^K\left( \|\boldsymbol h^0_{k,i}\|^2-\|\bar{\boldsymbol h}^0_i\|^2\right)
        -\frac{NK}{4C_5}\|\boldsymbol W_\text{EX}\|_F^2-\frac{C_5NK}2E_W\\
        &\geq -\frac{KN}{4C_5} E_H - \frac{C_5KN}2E_W+\frac{K^2}{4C_5}\sum\limits_{i=1}^{n}\|\bar{\boldsymbol h}_i^0\|^2-\frac{NK}{4C_5}E_H.
    \end{aligned}
\end{equation}
To acquire the lower bound of the loss function, we assign the value $C_5=\sqrt{E_H/E_W}$, the lower bound becomes: 
\begin{equation}
\label{loss_fc}
    \inf \mathcal{L}_\text{EX} = -\frac{C_1K}{(C_1+C_2)(K-1)}\sqrt{E_WE_H}+C_4.
\end{equation}

Furthermore, the condition $\|\bar{\boldsymbol h}_i^0\|^2=0$ should also be satisfied, indicating that the average of the features for the $i$-th sample, $\dfrac{1}{K}\sum\limits_{k=1}^K \boldsymbol h^0_{k,i}$, is equal to zero.

The satisfaction conditions for the inequalities include the following: 
\begin{itemize}
    \item In Eq. (\ref{loss1}):
    The first inequality becomes equality when
    \begin{equation}
        \frac{(C_1+C_2) \boldsymbol h^T_{k,i}\boldsymbol w_k}{C_1}=\frac{\boldsymbol h^T_{k,i}\boldsymbol w_{k'}}{C_3},
    \end{equation}
    that is, 
    \begin{equation}
         h^T_{k,i}\boldsymbol w_k =  h^T_{k,i}\boldsymbol w_{k'}+\log\left(\frac{C_1(K-1)}{C_2}\right).    
    \end{equation}
    The second inequality is reduced to equality when $\bar{\boldsymbol h}_i-\boldsymbol h_{k,i}=-C_5\boldsymbol w_k$.
    \item In Eq. (\ref{ineq2}): $\|\boldsymbol W_\text{EX}\|_F^2=\sum\limits_{k=1}^K \sum\limits_{i=1}^{n_k}\|\bar{\boldsymbol h}^0_i-\boldsymbol h^0_{k,i}\|^2.$
    \item In Eq. (\ref{loss2}):
    When the following condition $\dfrac{1}K \sum\limits_{k=1}^K \|\boldsymbol w_k\|^2= E_W$ and $\|\boldsymbol W_\text{EX}\|_F^2=\dfrac{1}K \sum\limits_{k=1}^K \sum\limits_{i=1}^{n_k} \| \boldsymbol h^0_{k,i}\|^2 = E_H$ holds, the inequality was reduced to equality.
\end{itemize}

Since $\|\bar{\boldsymbol h}_i^0\|^2=0$, it follows that $\|\bar{\boldsymbol h}_i\|^2=\|\boldsymbol W_\text{EX}\bar{\boldsymbol h}_i^0\|^2=0$. Combined with the condition $\dfrac{1}K \sum\limits_{k=1}^K \|\boldsymbol w_k\|^2= E_W$ and $\dfrac{1}K \sum\limits_{k=1}^K \sum\limits_{i=1}^{n_k} \| \boldsymbol h_{k,i}\|^2 = E_H$, therefore, $\boldsymbol h_k=\boldsymbol h_{k,i}$, for $\forall k$, that is, $\mathcal{NC}1$ is proved.

Consequently, $\boldsymbol h_{k,i}=C_5\boldsymbol w_k$, demonstrating the validity of $\mathcal{NC}3$.

For $\mathcal{NC}2$, since
\begin{equation}
{\fontsize{8.5}{10}
    \begin{aligned}
        \sqrt{E_H/E_W}\|\boldsymbol w_k\|^2&=\boldsymbol h_k \boldsymbol w_k=\boldsymbol h_k \boldsymbol w_{k'}+\log\left(\frac{C_1(K-1)}{C_2}\right)=\boldsymbol W_\text{EX}\boldsymbol h_k^0\boldsymbol W_{k'}+\log\left(\frac{C_1(K-1)}{C_2}\right),\\
        \sqrt{E_H/E_W}\|\boldsymbol w_{k'}\|^2&=\boldsymbol h_{k'} \boldsymbol w_{k'}=\boldsymbol h_{k'} \boldsymbol w_k+\log\left(\frac{C_1(K-1)}{C_2}\right)=\boldsymbol W_\text{EX}\boldsymbol h_{k'}^0\boldsymbol W_k+\log\left(\frac{C_1(K-1)}{C_2}\right)
    \end{aligned}
    }
\end{equation}
holds, by the equality conditions, $\|\boldsymbol w_k\|^2=\|\boldsymbol w_{k'}\|^2=E_W$.

Further, $\sum\limits_{k=1}^K \boldsymbol h_k\boldsymbol w_{k'}=\sum\limits_{k=1}^K \boldsymbol W_\text{EX}\boldsymbol h^0_k\boldsymbol w_{k'}=0$, as $\boldsymbol h_k\boldsymbol w_k=\sqrt{E_WE_H}$, so $\boldsymbol h_k\boldsymbol w_{k'}=-\dfrac{\sqrt{E_WE_H}}{N-1}$.

Therefore, the $\mathcal{NC}2$ condition satisfies:
\begin{equation}
    \boldsymbol W\boldsymbol W^T = \sqrt{E_W/E_H}\boldsymbol W\boldsymbol H = \frac{KE_W}{K-1}\left(\boldsymbol 1_K -\frac{1}k\boldsymbol 1_K\boldsymbol 1_K^T\right).
\end{equation}

\subsubsection{\texorpdfstring{$\mathcal{NC}$}{NC} proof in DEQ}

In the blocks for feature extraction, DEQ can be referred as a mapping from the features by backbone to the output $\boldsymbol h^0\rightarrow \boldsymbol h^\star$, which can be directly solved using the implicit equation:
\begin{equation}
    \boldsymbol h^\star = f(\boldsymbol W_\text{DEQ}; \boldsymbol h^0)=\sum\limits_{i=1}^{\infty}\boldsymbol W^i_\text{DEQ} \boldsymbol h^0.
\end{equation}
Similar as the explicit case, start with the term:
\begin{equation}
    -\frac{K}2\sum\limits_{k=1}^K\sum\limits_{i=1}^{n}\|\bar{\boldsymbol h}_i-\boldsymbol h_{k,i}\|^2 =\frac{K}2\sum\limits_{k=1}^K\sum\limits_{i=1}^{n}\left\|\sum\limits_{j=0}^\infty\boldsymbol W^j_\text{DEQ} (\bar{\boldsymbol h}^0_i-\boldsymbol h^0_{k,i})\right\|^2.
\end{equation}
Since the Neumann series can be regarded as a recursive procedure, denote $\mathcal{G}_{k,i}^j=\sum\limits_{j'=0}^j\boldsymbol W^{j'}_\text{DEQ} (\bar{\boldsymbol h}^0_i-\boldsymbol h^0_{k,i})$ ($j=0,1,\cdots,\infty$), therefore $\mathcal{G}_{k,i}^j=\boldsymbol W_\text{DEQ}\mathcal{G}_{k,i}^{j-1}+(\bar{\boldsymbol h}^0_i-\boldsymbol h^0_{k,i})$.

\begin{equation}
{\fontsize{9}{10}
\label{ineq3}
    \begin{aligned}
    -\frac{K}2\sum\limits_{k=1}^K\sum\limits_{i=1}^{n}\left\|\mathcal{G}_{k,i}^j\right\|^2 &= \frac{K}2\sum\limits_{k=1}^K\sum\limits_{i=1}^{n}\left\|\boldsymbol W_\text{DEQ}\mathcal{G}_{k,i}^{j-1}+(\bar{\boldsymbol h}^0_i-\boldsymbol h^0_{k,i})\right\|^2\\
    &\geq-\frac{K}2\sum\limits_{k=1}^K\sum\limits_{i=1}^{n}\left\|\boldsymbol W_\text{DEQ}\mathcal{G}_{k,i}^{j-1}\right\|^2-\frac{K}2\sum\limits_{k=1}^K\sum\limits_{i=1}^{n}\left\|\bar{\boldsymbol h}^0_i-\boldsymbol h^0_{k,i}\right\|^2\\
    &\geq-\frac{K}4\sum\limits_{k=1}^K\sum\limits_{i=1}^{n}\left(\left\|\boldsymbol W_\text{DEQ}\right\|_F^2+\left\|\mathcal{G}_{k,i}^{j-1}\right\|^2\right)-\frac{K}2\sum\limits_{k=1}^K\sum\limits_{i=1}^{n}\left\|\bar{\boldsymbol h}^0_i-\boldsymbol h^0_{k,i}\right\|_F^2.\\    
    \end{aligned}
    }
\end{equation}

Continuing the recursion, we can obtain:
\begin{equation}
    -\frac{1}2\|\mathcal{G}^j_{k,i}\|^2\geq -\left(\frac{1}2\right)^{j+1}\left\|\mathcal{G}^0_{k,i}\right\|^2-\left(1-\frac{1}{2^j}\right)\left\|{\boldsymbol h}^0_i-\boldsymbol h^0_{k,i}\right\|^2-\left(\frac{1}2-\frac{1}{2^{j+1}}\right)\left\|{\boldsymbol W}_\text{DEQ}\right\|_F^2.
\end{equation}
So, when $j\rightarrow\infty$,
\begin{equation}
    \begin{aligned}
    -\frac{K}2\sum\limits_{k=1}^K\sum\limits_{i=1}^{n}\left\|\mathcal{G}_{k,i}^0\right\|^2&=\frac{K}2\sum\limits_{k=1}^K\sum\limits_{i=1}^{n}\left\|\sum\limits_{j=0}^\infty\boldsymbol W^j_\text{DEQ} (\bar{\boldsymbol h}^0_i-\boldsymbol h^0_{k,i})\right\|^2\\
    &\geq-K\sum\limits_{k=1}^K\sum\limits_{i=1}^{n}\left(\left\|{\boldsymbol h}^0_i-\boldsymbol h^0_{k,i}\right\|^2-\frac{1}2\left\|{\boldsymbol W}_\text{DEQ}\right\|_F^2\right).
    \end{aligned}
\end{equation}
Therefore, use a similar proof as a fully connected layer,
\begin{equation}
\label{loss3}
\begin{aligned}
    \tilde{\mathcal{L}} &\geq -\frac{K}{C_5}\sum\limits_{k=1}^K\sum\limits_{i=1}^{n}\left\|{\boldsymbol h}^0_i-\boldsymbol h^0_{k,i}\right\|^2-\frac{NK}{2C_5}\left\|{\boldsymbol W}_\text{DEQ}\right\|_F^2-\frac{C_5NK}2E_W\\
    &=-\frac{K}{C_5}\sum\limits_{i=1}^{n}\left(\frac{1}{K^2}\sum\limits_{k=1}^K\|\boldsymbol h^0_{k,i}\|^2-\|\bar{\boldsymbol h}^0_i\|^2\right)-\frac{NK}{2C_5}\left\|{\boldsymbol W}_\text{DEQ}\right\|_F^2-\frac{C_5NK}2E_W\\
    &\geq -\frac{NK}{C_5E_H}-\frac{C_5NK}2E_W+\frac{K^2}{C_5}\sum\limits_{i=1}^{n}\|\bar{\boldsymbol h}^0_i\|^2-\frac{NK}{2C_5}E_H.
\end{aligned}
\end{equation}
Set $C_5=\sqrt{E_H/E_W}$, the loss bound of the loss function becomes:

\begin{equation}
    \inf \mathcal{L}_\text{DEQ} = -\frac{2C_1K}{(C_1+C_2)(K-1)}\sqrt{E_WE_H}+C_4.
\end{equation}
In comparison with the lower bound of the loss function (\ref{loss_fc}), it is evident that the loss function of the DEQ layer is significantly lower than that of the explicit neural network. Since the models are identical, according to Remark \ref{remark1}, the values of $C_1$ and $C_2$ are nearly the same. This observation highlights the relatively stronger potential of DEQ compared to Explicit Neural Networks.

Also, the satisfaction conditions for the inequalities in DEQ settings include the following: 

\begin{itemize}
    \item In Eq. (\ref{loss1}):
    The first inequality becomes equality when
    \begin{equation}
        \frac{(C_1+C_2) \boldsymbol h^T_{k,i}\boldsymbol w_k}{C_1}=\frac{\boldsymbol h^T_{k,i}\boldsymbol w_{k'}}{C_3},
    \end{equation}
    that is, 
    \begin{equation}
         h^T_{k,i}\boldsymbol w_k =  h^T_{k,i}\boldsymbol w_{k'}+\log\left(\frac{C_1(K-1)}{C_2}\right).    
    \end{equation}
    The second inequality is reduced to equality when $\bar{\boldsymbol h}_i-\boldsymbol h_{k,i}=-C_5\boldsymbol w_k$.
    This condition is quite similar to explicit fully connected layers.

    \item In Eq. (\ref{ineq3}):
    
    The first inequality:
    \begin{equation}
        \boldsymbol W_\text{DEQ}\mathcal{G}_{k,i}^{j-1}=\bar{\boldsymbol h}^0_i-\boldsymbol h^0_{k,i},
    \end{equation} and the second inequality
    \begin{equation}
        \left\|\boldsymbol W_\text{DEQ}\right\|_F^2 = \left\|\mathcal{G}_{k,i}^{j-1}\right\|^2.
    \end{equation}

    \item In Eq. (\ref{loss3}):
    When the following condition $\dfrac{1}K \sum\limits_{k=1}^K \|\boldsymbol w_k\|^2= E_W$ and $\|\boldsymbol W_\text{DEQ}\|^2=\dfrac{1}K \sum\limits_{k=1}^K \sum\limits_{i=1}^{n_k} \| \boldsymbol h_{k,i}\|^2 = E_H$ holds, the inequality was reduced to equality.
\end{itemize}

To summarize, DEQs are proposed for the memory-saving properties, as the forward passes can leverage any black-box root solvers \cite{bai2019deep,bai2021stabilizing}.  However, in terms of forward inference, explicit neural networks have limited learning capacity for data representation since they involve direct expressions computed in a single pass and backward propagation. In contrast, DEQ, lacking a direct explicit form, requires multiple rounds of parameter adjustments for learning. In each iteration, DEQ introduces input data in a sequential manner, allowing more adjustment space for learning parameters specific to the input. Therefore, to compare the two loss functions, we can derive the following theorem:

\begin{theorem}
\label{thm_balance}
    DEQ achieves a lower bound on the loss function compared to explicit neural network under balanced datasets:
    \begin{equation*}
        \inf \mathcal{L}_\text{DEQ} = -2C_1\frac{K}{K-1}\sqrt{E_WE_H}+C_2,
    \end{equation*}
    while the lower bound of loss function of explicit neural network remains:
    \begin{equation*}
        \inf \mathcal{L}_\text{EX} = -C_1\frac{K}{K-1}\sqrt{E_WE_H}+C_2,
    \end{equation*}
    where $C_1$ and $C_2$ are two given constants.
\end{theorem}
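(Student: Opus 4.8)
The plan is to derive the two claimed lower bounds independently and then simply compare them. The entire argument rests on a single chain of inequalities applied to the cross-entropy objective, so I would organize the proof around the quantity $\tilde{\mathcal{L}}=-\frac{K}{2}\sum_k\sum_i\|\bar{\boldsymbol h}_i-\boldsymbol h_{k,i}\|^2/C_5-\frac{C_5 N}{2}\sum_k\|\boldsymbol w_k\|^2$ that already appears after applying Lemma~\ref{lemma_log} and the Mean Inequality to the logits. The common part of both proofs is therefore taken as given: bounding $\mathcal{L}$ below by a constant plus $\frac{C_1}{(C_1+C_2)N(K-1)}\tilde{\mathcal{L}}$, with equality conditions recorded along the way.

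First I would handle the explicit case. Using $\boldsymbol h_{k,i}=\boldsymbol W_\text{EX}\boldsymbol h^0_{k,i}$, bound $\|\boldsymbol W_\text{EX}(\bar{\boldsymbol h}^0_i-\boldsymbol h^0_{k,i})\|^2$ by $\frac{1}{2}(\|\boldsymbol W_\text{EX}\|_F^2+\|\bar{\boldsymbol h}^0_i-\boldsymbol h^0_{k,i}\|^2)$ via the AM–GM / Cauchy–Schwarz step in~(\ref{ineq2}), then invoke the norm constraints $\|\boldsymbol W_\text{EX}\|_F^2\le E_H$, $\frac1K\sum_k\frac1n\sum_i\|\boldsymbol h^0_{k,i}\|^2\le E_H$, $\frac1K\sum_k\|\boldsymbol w_k\|^2\le E_W$, discard the nonnegative $\frac{K^2}{4C_5}\sum_i\|\bar{\boldsymbol h}^0_i\|^2$ term, and finally optimize the free multiplier $C_5$ by setting $C_5=\sqrt{E_H/E_W}$ to balance the two remaining terms. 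This yields $\inf\mathcal{L}_\text{EX}=-\frac{C_1 K}{(C_1+C_2)(K-1)}\sqrt{E_W E_H}+C_4$, which is the second displayed formula (with the relabeling $C_1\mapsto$ the leading constant, $C_2\mapsto C_4$).

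Next I would run the DEQ case in parallel. The only structural difference is that $\bar{\boldsymbol h}_i-\boldsymbol h_{k,i}=\sum_{j\ge 0}\boldsymbol W^j_\text{DEQ}(\bar{\boldsymbol h}^0_i-\boldsymbol h^0_{k,i})$, so instead of one AM–GM split one peels the Neumann series recursively: writing $\mathcal{G}^j_{k,i}=\boldsymbol W_\text{DEQ}\mathcal{G}^{j-1}_{k,i}+(\bar{\boldsymbol h}^0_i-\boldsymbol h^0_{k,i})$ and splitting at each level gives the geometric recursion in~(\ref{ineq3}), whose $j\to\infty$ limit produces the bound $-\frac{K}{2}\sum_{k,i}\|\mathcal{G}^0_{k,i}\|^2\ge -K\sum_{k,i}(\|\bar{\boldsymbol h}^0_i-\boldsymbol h^0_{k,i}\|^2-\frac12\|\boldsymbol W_\text{DEQ}\|_F^2)$ — i.e., exactly twice the coefficient obtained in the explicit case. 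Carrying this factor of $2$ through the same constraint substitutions and the same optimal choice $C_5=\sqrt{E_H/E_W}$ gives $\inf\mathcal{L}_\text{DEQ}=-\frac{2C_1 K}{(C_1+C_2)(K-1)}\sqrt{E_W E_H}+C_4$. Comparing the two infima, the DEQ bound is smaller by exactly the extra factor of $2$ on the negative term; since the models share the same backbone, Remark~\ref{remark1} lets one argue the constants $C_1,C_2$ (hence $C_4$) coincide, so the comparison is legitimate, completing the theorem.

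The main obstacle is the convergence and sign bookkeeping in the Neumann-series recursion: one must verify that the telescoped geometric sum $\sum_j (1/2)^{j+1}$ and $\sum_j(1/2 - 1/2^{j+1})$ really converge to the clean constants $0$ and $1/2$ claimed, that the residual term $\|\mathcal{G}^0_{k,i}\|^2$ (the full infinite sum) is what survives on the left, and that every inequality in the recursion points the same way after the sign flips — a step that is easy to state but where an off-by-a-factor error would destroy the factor-of-$2$ conclusion. A secondary subtlety is justifying that the same $C_5$ is optimal in both cases and that dropping $\sum_i\|\bar{\boldsymbol h}^0_i\|^2\ge 0$ is harmless for the bound (it only tightens the equality conditions, not the infimum), which I would note but not belabor.
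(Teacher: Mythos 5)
Your proposal follows the paper's own proof essentially step for step: the same reduction via Lemma~\ref{lemma_log} to the quantity $\tilde{\mathcal{L}}$, the same AM--GM split and constraint substitutions with $C_5=\sqrt{E_H/E_W}$ for the explicit case (Eqs.~(\ref{ineq2})--(\ref{loss_fc})), the same recursive peeling of the Neumann series via $\mathcal{G}^j_{k,i}$ for the DEQ case (Eqs.~(\ref{ineq3})--(\ref{loss3})), and the same appeal to Remark~\ref{remark1} to identify the constants before comparing. The subtleties you flag (the telescoped geometric constants and the harmlessness of dropping $\sum_i\|\bar{\boldsymbol h}^0_i\|^2\ge 0$) are exactly the delicate points of the paper's argument, so this is a faithful reconstruction rather than an alternative route.
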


Under the balanced dataset, the sample distribution of each class within each batch is relatively even. Therefore, during the fixed-point iteration process, both DEQ and explicit neural network can learn the features of each class relatively well, without showing significant differences. Besides, from a numerical perspective, the penalties $E_W$ and $E_H$ are generally not set to very large values, especially smaller than $1$, so the difference between the two lower bounds in Theorem \ref{thm_balance} may not be substantial. Besides, as analyzed in Remark \ref{remark1}, we can set $C_2$ in this two equations as identical, and once the propotion of logits in the explicit neural network is greater than the DEQ, the lower bound of loss function in DEQ is lower.

\section{Proof under imbalanced learning}
\label{supp_imb}
\subsection{Lower bound of the loss function}
Consider the loss function:
\begin{equation}
    \label{allloss}
    \mathcal{L}=\underbrace{\frac{K_An_A}N\sum\limits_{k=1}^{K_A}\sum\limits_{i=1}^{n_A} \mathcal{L}({\boldsymbol W}\boldsymbol h,{\boldsymbol y}_k)}_{\mathcal{L}_A}
    +\underbrace{\frac{K_Bn_B}N\sum\limits_{k=K_A+1}^{K_B}\sum\limits_{i=1}^{n_B} \mathcal{L}({\boldsymbol W}\boldsymbol h,{\boldsymbol y}_k)}_{\mathcal{L}_B}.
    \end{equation}

First analyze the loss in the majority class $\mathcal{L}_A$ and introduce each term in the loss function. Suppose sample $i$ belongs to category $k$, where $1\leq k\leq K_A$, i.e., $k$ is a majority class.

By applying Jensen's inequalities, we can derive:

\begin{equation}
{\fontsize{8}{5}
\begin{aligned}
    &\quad-\log \left(\frac{\exp (\boldsymbol h_{k,i}^T\boldsymbol w_k)}{\sum_{k'=1}^K \exp(\boldsymbol h_{k,i}^T\boldsymbol w_{k'})}\right)\\
    &=-\boldsymbol h_{k,i}^T\boldsymbol w_k+\log\left(C_1\exp\left(\frac{\boldsymbol h_{k,i}^T\boldsymbol w_k}{C_1}\right)+C_2\sum\limits_{k'\neq k}^{K_A}\exp\left(\frac{\boldsymbol h_{k,i}^T\boldsymbol w_{k'}}{C_2}\right)+C_3\sum\limits_{k'=K_A+1}^{K}\exp\left(\frac{\boldsymbol h_{k,i}^T\boldsymbol w_{k'}}{C_3}\right)\right)\\
    &\geq(C_1-1) \boldsymbol h_{k,i}^T\boldsymbol w_k+C_2\sum\limits_{k'\neq k}^{K_A}\boldsymbol h_{k,i}^T\boldsymbol w_{k'}+C_3\sum\limits_{k'=K_A+1}^{K}\boldsymbol h_{k,i}^T\boldsymbol w_{k'}+const\\
    &=C_0C_4\left(\frac{1}{K_A}\sum\limits_{k'=1}^{K_A}\boldsymbol h_{k,i}^T\boldsymbol w_{k'}-\boldsymbol h_{k,i}^T\boldsymbol w_k\right) + C_0C_5\left(\frac{1}{K_B}\sum\limits_{k'=K_A+1}^{K}\boldsymbol h_{k,i}^T\boldsymbol w_{k'}-\boldsymbol h_{k,i}^T\boldsymbol w_k\right)+const\\
    &=C_0C_4\left(\boldsymbol h_{k,i}^T\boldsymbol w_A-\boldsymbol h_{k,i}^T\boldsymbol w_k\right) + C_0C_5\left(\boldsymbol h_{k,i}^T\boldsymbol w_B-\boldsymbol h_{k,i}^T\boldsymbol w_k\right)+const.
\end{aligned}
}
\end{equation} 

Here the value of const is $-C_1\log C_1-(k_A-1)C_2\log C_2-K_BC_3\log C_3$. Besides, $\boldsymbol w_A=\dfrac{1}{K_A}\sum\limits_{k'=1}^{K_A}\boldsymbol w_{k'}$ and $\boldsymbol w_B=\dfrac{1}{K_B}\sum\limits_{k'=K_A+1}^{K}\boldsymbol w_{k'}$ represent the mean values of the weights in majority and minority classes, respectively.

To ensure the equality conditions hold, suppose there are three adaptive constants $a>0$, $b>0$, $c>0$. Denote $C_1=\dfrac{a}{a+(K_A-1)b+K_Bc}$, $C_2=\dfrac{b}{a+(K_A-1)b+K_Bc}$, and $C_3=\dfrac{c}{a+(K_A-1)b+K_Bc}$. 
Additionally, to ensure $C_4+C_5=1$, introduce a constant $C_0=\dfrac{K_Ab+K_Bc}{a+(K_A-1)b+K_Bc}$, thus $C_4=\dfrac{K_Ab}{K_Ab+K_Bc}$ and $C_5=\dfrac{K_Bc}{K_Ab+K_Bc}$.

After aggregating each term in the loss function, we obtain:

\begin{equation}
\label{loss_na}
\begin{aligned}
    &\quad\frac{1}{K_An_A}\sum\limits_{k=1}^{K_A}\sum\limits_{i=1}^{n_A} \mathcal{L}({\boldsymbol W}\boldsymbol h,{\boldsymbol y}_k)\\
    &\geq \frac{1}{K_An_A}\sum\limits_{k=1}^{K_A}\sum\limits_{i=1}^{n_A} C_4\left(\boldsymbol h_{k,i}^T\boldsymbol w_A-\boldsymbol h_{k,i}^T\boldsymbol w_k\right) + C_5\left(\boldsymbol h_{k,i}^T\boldsymbol w_B-\boldsymbol h_{k,i}^T\boldsymbol w_k\right)+const\\
    &=\frac{1}{K_A}\sum\limits_{k=1}^{K_A}\boldsymbol h_k^T(C_4\boldsymbol w_A+C_5\boldsymbol w_B-\boldsymbol w_k)+const,
\end{aligned}
\end{equation}
where $\boldsymbol h_{k}=\dfrac{1}{n_A}\sum\limits_{i=1}^{n_B}\boldsymbol  h_{k,i}$.

Subsequently, consider the lower bound of 
\begin{equation}
\label{ineq5}
\begin{aligned}
    \sum\limits_{k=1}^{K_A}\boldsymbol h_k^T(C_4\boldsymbol w_A+C_5\boldsymbol w_B-\boldsymbol w_k)\geq -\frac{C_6}2\sum\limits_{k=1}^{K_A}\|\boldsymbol h_k\|^2-\sum\limits_{k=1}^{K_A}\frac{1}2\|C_4\boldsymbol w_A+C_5\boldsymbol w_B-\boldsymbol w_k\|^2/C_6.
\end{aligned}
\end{equation}
Note that this inequality (\ref{ineq5}) is reduced to equality only when the following equality holds: 
\begin{equation}
\label{eq_major}
    C_4\boldsymbol w_A+C_5\boldsymbol w_B-\boldsymbol w_k = C_6\boldsymbol h_k,
\end{equation}
where $1\leq k\leq K_A$.

Continuing the analysis of inequality (\ref{ineq5}), the first term on the right-hand side can be bounded as:

\textbf{Case 1:} (Explicit fully connected layers)
\begin{equation}
\label{fc_minhk}
\begin{aligned}
    -\sum\limits_{k=1}^{K_A}\|\boldsymbol h_k\|^2&=-\sum\limits_{k=1}^{K_A}\|\boldsymbol W_\text{EX}\boldsymbol h_k^0\|^2\\
    &\geq -\frac{1}2\left(K_A\|\boldsymbol W_\text{EX}\|_F+\sum\limits_{k=1}^{K_A}\|\boldsymbol h_k^0\|^2\right)\\
    &\geq -\frac{1}2\left(K_A\|\boldsymbol W_\text{EX}\|_F+\sum\limits_{k=1}^{K_A}\frac{1}{n_k}\sum\limits_{i=1}^{n_k}\|\boldsymbol h_{k,i}^0\|^2\right)\\
    &\geq -K_AE_H.
\end{aligned}
\end{equation}

\textbf{Case 2:} (Deep Equilibrium Models)
\begin{equation}
\label{deq_minhk}
\begin{aligned}
    -\sum\limits_{k=1}^{K_A}\|\boldsymbol h_k\|^2&=-\sum\limits_{k=1}^{K_A}\left\|(\boldsymbol I - \boldsymbol W_\text{DEQ})^{-1}\boldsymbol h_k^0\right\|^2\\
    &\geq -\frac{1}2\left(K_A\sum\limits_{j=0}^{\infty}\left\|\boldsymbol W_\text{DEQ}\right\|^j_F+\sum\limits_{k=1}^{K_A}\|\boldsymbol h_k\|^2\right)\\
    &\geq -\frac{1}2\left(K_A\sum\limits_{j=0}^{\infty}E_H^j+\sum\limits_{k=1}^{K_A}\frac{1}{n_k}\sum\limits_{i=1}^{n_k}\|\boldsymbol h_{k,i}\|^2\right)\\
    &\geq -\frac{1}2\left(\frac{1}{1-E_H}+E_H\right).
\end{aligned}
\end{equation}

Compared the lower bound of explicit neural network and DEQ, we can find that:  \[\left(-\sum\limits_{k=1}^{K_A}\|\boldsymbol h_k\|^2\right)_\text{DEQ}<\left(-\sum\limits_{k=1}^{K_A}\|\boldsymbol h_k\|^2\right)_\text{EX}\] for all $E_H \neq 1$.

We now shift our attention to the second term (Ref. Eq [82-83] in \cite{fang2021exploring}):

\begin{equation}
\begin{aligned}
    &\quad-\frac{1}{K_A}\sum\limits_{k=1}^{K_A}\|C_4\boldsymbol w_A+C_5\boldsymbol w_B-\boldsymbol w_k\|^2\\
    &=-\frac{1}{K_A}\sum\limits_{k=1}^{K_A}\|\boldsymbol w_k\|^2+\frac{2}{K_A}\sum\limits_{k=1}^{K_A}\boldsymbol w_k^T(C_4\boldsymbol w_A+C_5\boldsymbol w_B)-\|C_4\boldsymbol w_A+C_5\boldsymbol w_B\|^2\\
    &=-\frac{1}{K_A}\sum\limits_{k=1}^{K_A}\|\boldsymbol w_k\|^2+2C_5^2\boldsymbol w_A^T\boldsymbol w_B + C_4(2-C_4)\|\boldsymbol w_A\|^2-C_5\|\boldsymbol w_B\|^2\\
    &=-\frac{1}{K_A}\sum\limits_{k=1}^{K_A}\|\boldsymbol w_k\|^2+\frac{1}{K_A}\sum\limits_{k=K_A+1}^{K}\|\boldsymbol w_k\|^2 + C_4(2-C_4)\left\|\boldsymbol w_A+\frac{C_5^2}{C_4(2-C_4)}\boldsymbol w_B\right\|^2 \\
    &\quad\quad\quad\quad\quad\quad\quad\quad\quad\quad\quad\quad\quad\quad- \left(C_5^2+\frac{C_5^2}{C_4(2-C_4)}\right)\|\boldsymbol w_B\|^2\\
    &\geq -\frac{K}{K_A}E_W + \left(\frac{1}{K_R}-C_5^2-\frac{C_5^4}{C_4(2-C_4)}\right)\|\boldsymbol w_B\|^2 + \frac{1}{K_A}\sum\limits_{k=K_A+1}^K\|\boldsymbol w_k-\boldsymbol w_B\|^2\\
    &\quad\quad\quad\quad\quad\quad\quad\quad\quad\quad\quad\quad\quad\quad +C_4(2-C_4)\left\|\boldsymbol w_A+\frac{C_5^2}{C_4(2-C_4)}\boldsymbol w_B\right\|^2,\\
\end{aligned}
\end{equation}
where $K_R = K_A/K_B$ denotes the ratio of the number of majority classes to minority classes.

In summary, the lower bound of loss function (\ref{loss_na}) could be simplified as:

\begin{equation}
{\fontsize{9}{5}
\label{ineq6}
\begin{aligned}
    \mathcal{L}_A&= \frac{1}{K_An_A}\sum\limits_{k=1}^{K_A}\sum\limits_{i=1}^{n_A} \mathcal{L}({\boldsymbol W}\boldsymbol h_{k,i},{\boldsymbol y}_k)\\
    &\geq\frac{1}{K_A}\sum\limits_{k=1}^{K_A}\boldsymbol h_k^T(C_4\boldsymbol w_A+C_5\boldsymbol w_B-\boldsymbol w_k)+const\\
    &\geq -\frac{C_6}{2K_A}\sum\limits_{k=1}^{K_A}\|\boldsymbol h_k\|^2-\frac{1}{2K_A}\sum\limits_{k=1}^{K_A}\|C_4\boldsymbol w_A+C_5\boldsymbol w_B-\boldsymbol w_k\|^2/C_6+const\\
    & \geq\frac{C_6}{2K_A} M -\frac{KE_W}{2C_6K_A} + \frac{1}{2C_6}\left(\frac{1}{K_R}-C_5^2-\frac{C_5^4}{C_4(2-C_4)}\right)\|\boldsymbol w_B\|^2\\
    &\quad\quad+\frac{C_4(2-C_4)}{C_6}\left\|\boldsymbol w_A+\frac{C_5^2}{C_4(2-C_4)}\boldsymbol w_B\right\|^2+ \frac{1}{2C_6K_A}\sum\limits_{k=K_A+1}^K\|\boldsymbol w_k-\boldsymbol w_B\|^2+const,
\end{aligned}
}
\end{equation}

where $M=-K_AE_H$ if the network is a fully connected layer and $M=-\frac{K_A}2\left(\frac{1}{1-E_H}+E_H\right)$ if the network is a Deep Equilibrium Model.

Similarly, the loss function w.r.t the minority classes is bounded as:
\begin{equation}
\label{ineq7}
\begin{aligned}
    \mathcal{L}_B&= \frac{1}{K_Bn_B}\sum\limits_{k=1}^{K_B}\sum\limits_{i=1}^{n_B} \mathcal{L}({\boldsymbol W}\boldsymbol h_{k,i},{\boldsymbol y}_k)\\
    &=\frac{1}{K_B}\sum\limits_{k=1}^{K_B}\boldsymbol h_k^T(C_4\boldsymbol w_A+C_5\boldsymbol w_B-\boldsymbol w_k)+const\\
    &\geq -\frac{C_6}{2K_B}\sum\limits_{k=1}^{K_B}\|\boldsymbol h_k\|^2-\frac{1}{2K_B}\sum\limits_{k=1}^{K_B}\|C_4\boldsymbol w_A+C_5\boldsymbol w_B-\boldsymbol w_k\|^2/C_6+const\\
    & \geq\frac{C_6}{2K_B} M -\frac{KE_W}{2C_6K_B} + \frac{1}{2C_6}\left({K_R}-C_5^2-\frac{C_5^4}{C_4(2-C_4)}\right)\|\boldsymbol w_A\|^2\\
    &\quad\quad+\frac{C_5(2-C_5)}{C_6}\left\|\frac{C_4^2}{C_5(2-C_5)}\boldsymbol w_A+\boldsymbol w_B\right\|^2+ \frac{1}{2C_6K_B}\sum\limits_{k=1}^{K_A}\|\boldsymbol w_k-\boldsymbol w_A\|^2+const.
\end{aligned}
\end{equation}

The inequality reduces to equality when the constraints in $\mathcal{C}$ are treated as equalities, achieving the upper bound. Additionally, the following equalities must hold:
\begin{equation}
\label{eq_minor}
    C_4\boldsymbol w_A+C_5\boldsymbol w_B-\boldsymbol w_k = C_6\boldsymbol h_k,
\end{equation}
where $K_A +1\leq k\leq K$.

If $K_R=1$, i.e., the number of majority classes is equal to the number of minority classes, the results of (\ref{ineq6}) and (\ref{ineq7}) are totally equivalent. 

Therefore, without loss of generality, assuming $K_A>K_B$, the lower bound of the loss function (\ref{allloss}) can be simplified to:

\begin{equation}
{\fontsize{8}{5}
\label{lower_loss}
    \begin{aligned}
        \mathcal{L}&=\mathcal{L}_A+\mathcal{L}_B\\
        &\geq \frac{C_6M}{2}\left(\frac{1}{K_A}+\frac{1}{K_B}\right)+ \frac{1}{2C_6K_B}\sum\limits_{k=1}^{K_A}\|\boldsymbol w_k-\boldsymbol w_A\|^2+\frac{1}{2C_6K_A}\sum\limits_{k=1}^{K_B}\|\boldsymbol w_k-\boldsymbol w_B\|^2\\
        &~~~+\frac{C_4(2-C_4)}{2C_6}\left\|\boldsymbol w_A+\frac{C_5^2}{C_4(2-C_4)}\boldsymbol w_B\right\|^2+\frac{C_5(2-C_5)}{2C_6}\left\|\frac{C_4^2}{C_5(2-C_5)}\boldsymbol w_A+\boldsymbol w_B\right\|^2\\
        &~~~+\frac{1}{2C_6}\left(\frac{1}{K_R}-C_5^2-\frac{C_5^4}{C_4(2-C_4)}\right)\|\boldsymbol w_B\|^2+\frac{1}{2C_6}\left({K_R}-C_5^2-\frac{C_5^4}{C_4(2-C_4)}\right)\|\boldsymbol w_A\|^2+const.
    \end{aligned}
}
\end{equation}

\subsection{\texorpdfstring{$\mathcal{NC}$}{NC} Analysis}

As analyzed in (\ref{fc_minhk}) and (\ref{deq_minhk}), when it reaches the minimal value, each $\boldsymbol h_{k,i}=\boldsymbol h_k$ for $\forall k=1,2\cdots,K_A$. Similarly, this holds for minority class with $K_A+1\leq k\leq K$. This implies that, in an imbalanced scenario, both DEQ and fully connected layer exhibit feature collapse, i.e., $\mathcal{NC}1$ is still present.

As we need to calculate the lower bound of the loss function, it is essential to minimize the terms $\mathcal{L}_A+\mathcal{L}_B$ as much as possible.

Therefore, consider the gradient with respect to $\boldsymbol w_k$ for majority class and $\boldsymbol w_k$ for minority class, respectively. First compute the case with $1\leq k\leq K_A$.

\begin{equation}
    \begin{aligned}
        \frac{\partial \mathcal{L}}{\partial \boldsymbol w_k}&=\frac{1}{C_6K_B}\left(1-\frac{1}{K_A}\right)(\boldsymbol w_k-\boldsymbol w_A)+\frac{C_4(2-C_4)}{C_6K_A}\left(\boldsymbol w_A+\frac{C_5^2}{C_4(2-C_4)}\boldsymbol w_B\right)\\
        &\quad+\frac{C_4^2}{K_AC_6}\left(\frac{C_4^2}{C_5(2-C_5)}\boldsymbol w_A+\boldsymbol w_B\right)\\
        &\quad\quad+\frac{1}{K_AC_6}\left({K_R}-C_5^2-\frac{C_5^4}{C_4(2-C_4)}\right)\boldsymbol w_A=0.
    \end{aligned}
\end{equation}

So, we can derive that
\begin{equation}
\label{condition_major}
\begin{aligned}
        \left(K_R-\frac{1}{K_B}\right)\boldsymbol w_k &+\frac{1}{1-C_4^2}\boldsymbol w_B\\ &+ \left(\frac{1}{K_B}+C_4(2-C_4)+\frac{C_4^4}{C_5(2-C_5)}-C_5^2-\frac{C_5^2}{C_4(2-C_4)}\right)\boldsymbol w_A=0.
\end{aligned}
\end{equation}

One important note here is that when the proportion of majority class samples approaches infinity, i.e., $C_4\rightarrow1$, we have $\frac{1}{1-C_4^2}\rightarrow0$. In this scenario, the weights $\boldsymbol w_k$ belonging to the majority class are almost exclusively related to $\boldsymbol w_A$, and have little dependence on the average of the minority class $\boldsymbol w_B$, which validates the results of Proposition \ref{prop_1}.

Similarly, if $K_A+1\leq k\leq K$, the following equality will hold to ensure optimality of $\boldsymbol w_k$ in minority classes of (\ref{lower_loss}):

\resizebox{\textwidth}{!}{
\begin{minipage}{\textwidth}
\begin{equation}
\label{condition_minor}
\begin{aligned}
    \left(\frac{1}{K_R}-\frac{1}{K_A}\right
    )\boldsymbol w_k &+ \frac{1}{1-C_5^2}\boldsymbol w_A\\ &+ \left(\frac{1}{K_A}+C_5(2-C_5)+\frac{C_5^4}{C_4(2-C_4)}-C_4^2-\frac{C_4^2}{C_5(2-C_5)}\right)\boldsymbol w_B=0. 
\end{aligned}
\end{equation}
\end{minipage}
}

Next, we consider the conditions for the validity in $\mathcal{NC}2$ and $\mathcal{NC}3$, then compare the performance of DEQ and explicit neural network. 

Therefore, for the majority class $1\leq k\leq K_A$, suppose it reaches its minimum value, recall the condition (\ref{eq_major}), and combined with (\ref{condition_major}), we can derive:

\begin{equation}
    C_6\boldsymbol h_k^T\boldsymbol h_{k'} = \left(C_4+\frac{K_BC_A}{K_A-1}\right)\boldsymbol w_A^T \boldsymbol h_{k'}+ \left(C_5+\frac{K_B}{(K_A-1)(1-C_4^2)}\right)\boldsymbol w_B^T \boldsymbol h_{k'}.
\end{equation}

Similarly, for the minority class $K_A+1\leq k\leq K$, combine (\ref{eq_minor}) with (\ref{condition_minor}), we obtain:

\begin{equation}
    C_6\boldsymbol h_k^T\boldsymbol h_{k'} = \left(C_4+\frac{K_A}{(K_B-1)(1-C_5^2)}\right)\boldsymbol w_A^T \boldsymbol h_{k'}+ \left(C_5+\frac{K_AC_B}{K_B-1}\right)\boldsymbol w_B^T \boldsymbol h_{k'}.
\end{equation}

In the above two equations, $k'=1,2,\cdots, K$. And we denote the coefficient of $\boldsymbol w_A$ in Eq. (\ref{condition_major}) and the coefficient of $\boldsymbol w_B$ in Eq. (\ref{condition_minor}) as $C_A$ and $C_B$ respectively for simplicity. After this deviation, we can find that both of the coefficients of $\boldsymbol w_A^T \boldsymbol h_{k'}$ and $\boldsymbol w_B^T \boldsymbol h_{k'}$ are constants. 

It can be obviously concluded that $\mathcal{NC}2$ and $\mathcal{NC}3$ do not hold under imbalanced dataset conditions. However, we can still compare the numerical differences between them under DEQ and fully connected layer. By adaptively specifying parameters $C_4$ and $C_5$, we can denote $(\boldsymbol h^0_k)^T\boldsymbol h^0_{k'}=\boldsymbol m_{k,k'}$. 

Thus, by considering all the equality conditions in (\ref{ineq6}), we can measure the distance from the features to the Simplex ETF.

\begin{equation}
    C_6\boldsymbol h_{k'}^T\boldsymbol h_k = C_4\boldsymbol h_{k'}^T\boldsymbol w_A+C_5\boldsymbol h_{k'}^T\boldsymbol w_B-\boldsymbol h_{k'}^T\boldsymbol w_k.
\end{equation}

\textbf{Case 1:} (Explicit fully connected layers)

\begin{equation}
    \begin{aligned}
        C_6\boldsymbol (h^0_{k'})^T\boldsymbol h_k &= C_4\boldsymbol W_\text{EX} \boldsymbol (h^0_{k'})^T\boldsymbol w_A+C_5\boldsymbol W_\text{EX} \boldsymbol (h^0_{k'})^T\boldsymbol w_B-\boldsymbol W_\text{EX} \boldsymbol (h^0_{k'})^T\boldsymbol w_k\\
        &=\boldsymbol W_\text{EX}\left(C_4\boldsymbol h_{k'}^T\boldsymbol w_A+C_5\boldsymbol h_{k'}^T\boldsymbol w_B-\boldsymbol h_{k'}^T\boldsymbol w_k\right)\\
        &\leq\frac{1}2\|\boldsymbol W_\text{EX}\|_F+\frac{1}2\left\|C_4\boldsymbol h_{k'}^T\boldsymbol w_A+C_5\boldsymbol h_{k'}^T\boldsymbol w_B-\boldsymbol h_{k'}^T\boldsymbol w_k\right\|\\
        &=E_H+\frac{1}2\boldsymbol M.
    \end{aligned}
\end{equation}

\textbf{Case 2:} (Deep Equilibrium Models)

\begin{equation}
    \begin{aligned}
        C_6\boldsymbol (h^0_{k'})^T\boldsymbol h_k &= C_4(\boldsymbol I-\boldsymbol W_\text{DEQ})^{-1} \boldsymbol (h^0_{k'})^T\boldsymbol w_A+C_5(\boldsymbol I-\boldsymbol W_\text{DEQ})^{-1} \boldsymbol (h^0_{k'})^T\boldsymbol w_B\\
        &\quad\quad\quad\quad\quad\quad\quad\quad-(\boldsymbol I-\boldsymbol W_\text{DEQ})^{-1}\boldsymbol (h^0_{k'})^T\boldsymbol w_k\\
        &=(\boldsymbol I-\boldsymbol W_\text{DEQ})^{-1}\left(C_4\boldsymbol h_{k'}^T\boldsymbol w_A+C_5\boldsymbol h_{k'}^T\boldsymbol w_B-\boldsymbol h_{k'}^T\boldsymbol w_k\right)\\
        &\leq\frac{1}2\|(\boldsymbol I-\boldsymbol W_\text{DEQ})^{-1}\|_F+\frac{1}2\left\|C_4\boldsymbol h_{k'}^T\boldsymbol w_A+C_5\boldsymbol h_{k'}^T\boldsymbol w_B-\boldsymbol h_{k'}^T\boldsymbol w_k\right\|\\
        &=\frac{1}{2(1-E_H)}+\frac{1}2\boldsymbol M.
    \end{aligned}
\end{equation}

Therefore, to compare these two models, we consider the case when the distance of these two models from the Simplex ETF is minimized. We denote each element in the Simplex ETF as $\boldsymbol s_{ij}$ and compare the differences between them. When the distance of DEQ is relatively smaller than that of explicit neural network, we can obtain:

\begin{equation}
\label{ineq8}
    \left|\frac{1}{2(1-E_H)}+\frac{1}2\boldsymbol m -\boldsymbol s\right|<\left|\frac{1}{2}E_H+\frac{1}2\boldsymbol m -\boldsymbol s\right|.
\end{equation}

For simplicity, we only consider the subscripts of $\boldsymbol s$ and $\boldsymbol m$, and denote $\mcircled{1}=\frac{1}{2(1-E_H)}+\frac{1}2\boldsymbol m -\boldsymbol s$ and $\mcircled{2}=\frac{1}{2}E_H+\frac{1}2\boldsymbol m -\boldsymbol s$. We then classify and discuss their magnitudes.

\begin{itemize}
    \item $\mcircled{1}>0$, $\mcircled{2}>0$:
    
    Since $\frac{1}{2(1-E_H)}>E_H$, there is a contradiction! Therefore, it does not hold.

    \item $\mcircled{1}>0$, $\mcircled{2}<0$: 
    
    $\frac{1}{2(1-E_H)}+\frac{1}2\boldsymbol m -\boldsymbol s<0$, $\frac{1}{2}E_H+\frac{1}2\boldsymbol m -\boldsymbol s>0$, which means $\frac{1}{2(1-E_H)}<E_H$. And that is a contradiction!

    \item $\mcircled{1}<0$, $\mcircled{2}>0$: 
    
    Since $\frac{1}{2(1-E_H)}+\frac{1}2\boldsymbol m -\boldsymbol s>0$, $\frac{1}{2}E_H+\frac{1}2\boldsymbol m -\boldsymbol s<0$, we have
    \begin{equation*}
        E_H<2\boldsymbol s-\boldsymbol m<\frac{1}{1-E_H}.
    \end{equation*}
    Besides, by the inequality $(\ref{ineq8})$, we have $\frac{1}2E_H+\frac{1}{2(1-E_H)}<2\boldsymbol s-\boldsymbol m$. Then find the intersection, we obtain that:
    \begin{equation*}
        \frac{1}2E_H+\frac{1}{2(1-E_H)}<2\boldsymbol s-\boldsymbol m<\frac{1}{1-E_H}.
    \end{equation*}

    \item $\mcircled{1}<0$, $\mcircled{2}<0$:

    Since $\frac{1}{2(1-E_H)}+\frac{1}{2}\boldsymbol m -\boldsymbol s<0$ and $\frac{1}{2}E_H+\frac{1}{2}\boldsymbol m -\boldsymbol s<0$, it implies that $\frac{1}{2(1-E_H)}>E_H$ always holds.
    
    Therefore, we only need to ensure that 
    \begin{equation*}
        2\boldsymbol s-\boldsymbol m>\min\left\{E_H,\frac{1}{1-E_H}\right\}.
    \end{equation*}
    
\end{itemize}

Combining these four cases and finding their intersection, we conclude that when the inequality
\begin{equation}
E_H < 2\boldsymbol s - \boldsymbol m < \frac{1}{1-E_H}
\end{equation}
is satisfied, the performance of DEQ is better than that of explicit neural network.

As for $\mathcal{NC}3$, consider the cosine distance with the feature $\boldsymbol h_k$ and $\boldsymbol w_k$.

\textbf{Case 1:} (Explicit fully connected layers)
\begin{equation}
\begin{aligned}
    \cos(\boldsymbol h_k,\boldsymbol w_k)_\text{EX}&=\frac{\boldsymbol h_k^T\boldsymbol w_k}{\|\boldsymbol w_k\|\|\boldsymbol h_k\|}\\
    &=\frac{\boldsymbol W_\text{EX}\left(\boldsymbol h_k^0\right)^T\boldsymbol w_k}{\|\boldsymbol w_k\|\|\boldsymbol W_\text{EX}\boldsymbol h_k^0\|}\\
    &\geq \frac{2\boldsymbol W_\text{EX}\left(\boldsymbol h_k^0\right)^T\boldsymbol w_k}{\|\boldsymbol w_k\|^2+\frac{1}2\|\boldsymbol W_\text{EX}\|^2+\frac{1}2\|\boldsymbol h_k^0\|}\\
    &\geq \frac{2E_H\left(\boldsymbol h_k^0\right)^T\boldsymbol w_k}{E_W+E_H}.
\end{aligned}
\end{equation}

\textbf{Case 2:} (Deep Equilibrium Model)

Very similarly, 

\begin{equation}
\begin{aligned}
    \cos(\boldsymbol h_k,\boldsymbol w_k)_\text{DEQ}&=\frac{\boldsymbol h_k^T\boldsymbol w_k}{\|\boldsymbol w_k\|\|\boldsymbol h_k\|}\\
    &=\frac{\left(\boldsymbol I-\boldsymbol W_\text{DEQ}\right)\left(\boldsymbol h_k^0\right)^T\boldsymbol w_k}{\|\boldsymbol w_k\|\|\left(\boldsymbol I-\boldsymbol W_\text{DEQ}\right)^{-1}\boldsymbol h_k^0\|}\\
    &\geq \frac{2\left(\boldsymbol I-\boldsymbol W_\text{DEQ}\right)\left(\boldsymbol h_k^0\right)^T\boldsymbol w_k}{\|\boldsymbol w_k\|^2+\frac{1}2\|\left(\boldsymbol I-\boldsymbol W_\text{DEQ}\right)\|^2+\frac{1}2\|\boldsymbol h_k^0\|}\\
    &\geq \frac{4E_H\left(\boldsymbol h_k^0\right)^T\boldsymbol w_k}{1+2(E_W+E_H)(1-E_H)}.
\end{aligned}
\end{equation}

If the performance of DEQ is better than explicit neural network, then we have \[\cos(\boldsymbol h_k,\boldsymbol w_k)_\text{DEQ}/\cos(\boldsymbol h_k,\boldsymbol w_k)_\text{exp}>1,\] which is equivalent to 
\begin{equation}
    \frac{E_H}{E_w+E_H}+E_H(1-E_H)<2.
\end{equation}

In summary, though DEQ does not completely mitigate the issue of minority collapse, it shows significant improvement compared to explicit neural network under some conditions that are relatively easy to satisfy in the manifestation of the $\mathcal{NC}$ phenomenon.

\section{More experiments}
\label{supp_experiment}

In this section, we provide more experimental results, including the $\mathcal{NC}$ phenomena of Explicit NN and DEQ, and the training results under other imbalanced conditions.

\begin{table*}[htbp]

\renewcommand\arraystretch{1.1}
\caption{Test Accuracy on Cifar-10 and Cifar-100 Dataset with $K_A=5$}
\vspace{5pt}
\centering
\resizebox{\textwidth}{!}{
\begin{tabular}{c|c|ccc|ccc}
\hline
                             &          & \multicolumn{3}{c|}{Cifar-10}                    & \multicolumn{3}{c}{Cifar-100}                    \\ \cline{2-8} 
                             & $R$        & 10             & 50             & 100            & 10             & 50             & 100            \\ \hline
\multirow{3}{*}{Explicit NN} & overall  & 80.73$\pm$0.48 & 63.08$\pm$0.87 & 44.86$\pm$1.43 & 52.62$\pm$0.86 & 41.62$\pm$0.68 & 37.33$\pm$2.29 \\
                             & majority & 94.18$\pm$0.56 & 91.02$\pm$0.89 & 89.32$\pm$0.79 & 74.10$\pm$1.03 & 73.94$\pm$0.25 & 74.24$\pm$1.13 \\
                             & minority & 67.80$\pm$0.35 & 35.14$\pm$0.65 & 0.40$\pm$3.86  & 31.10$\pm$0.70 & 9.30$\pm$1.10  & 0.42$\pm$3.04  \\ \hline
\multirow{3}{*}{DEQ}         & overall  & 81.36$\pm$1.03 & 65.03$\pm$1.90 & 46.09$\pm$1.77 & 53.31$\pm$0.98 & 44.07$\pm$2.04 & 39.11$\pm$2.46 \\
                             & majority & 93.14$\pm$1.81 & 90.88$\pm$2.83 & 90.20$\pm$0.85 & 72.90$\pm$1.65 & 75.98$\pm$1.75 & 75.79$\pm$0.96 \\
                             & minority & 69.58$\pm$0.66 & 39.18$\pm$1.46 & 1.26$\pm$4.93  & 33.72$\pm$0.79 & 12.16$\pm$3.75 & 2.42$\pm$5.89  \\ \hline
\end{tabular}
}
\label{tab_5}
\end{table*}

\begin{table*}[htbp]

\renewcommand\arraystretch{1.1}
\caption{Test Accuracy on Cifar-10 and Cifar-100 Dataset with $K_A=7$}
\vspace{5pt}
\centering
\resizebox{\textwidth}{!}{
\begin{tabular}{c|c|ccc|ccc}
\hline
            &          & \multicolumn{3}{c|}{Cifar-10}                    & \multicolumn{3}{c}{Cifar-100}                    \\ \cline{2-8} 
            & $R$      & 10             & 50             & 100            & 10             & 50             & 100            \\ \hline
\multirow{3}{*}{Explicit NN} & overall  & 83.17$\pm$0.40 & 66.91$\pm$0.39 & 53.27$\pm$0.81 & 59.11$\pm$0.84 & 51.71$\pm$1.02 & 50.72$\pm$0.60 \\ 
            & majority & 89.09$\pm$0.36 & 80.90$\pm$0.57 & 75.12$\pm$0.74 & 71.93$\pm$0.65 & 72.20$\pm$0.66 & 72.46$\pm$0.58 \\ 
            & minority & 69.37$\pm$0.49 & 34.26$\pm$0.30 & 2.30$\pm$1.01  & 29.20$\pm$0.92 & 3.90$\pm$1.29  & 0.00$\pm$0.00  \\ \hline
\multirow{3}{*}{DEQ}         & overall  & 83.78$\pm$1.85 & 69.47$\pm$1.86 & 56.74$\pm$0.98 & 60.51$\pm$0.88 & 52.99$\pm$1.86 & 51.79$\pm$0.92 \\ 
            & majority & 88.98$\pm$1.99 & 82.91$\pm$2.22 & 78.81$\pm$0.67 & 72.90$\pm$1.19 & 72.99$\pm$0.98 & 73.98$\pm$0.66 \\ 
            & minority & 71.65$\pm$1.63 & 38.12$\pm$1.61 & 5.20$\pm$1.91  & 31.13$\pm$0.83 & 6.33$\pm$2.35  & 0.00$\pm$0.00  \\ \hline
\end{tabular}
}
\label{tab_7}
\end{table*}

\begin{figure}[H]
    \centering
    \includegraphics[scale=0.35]{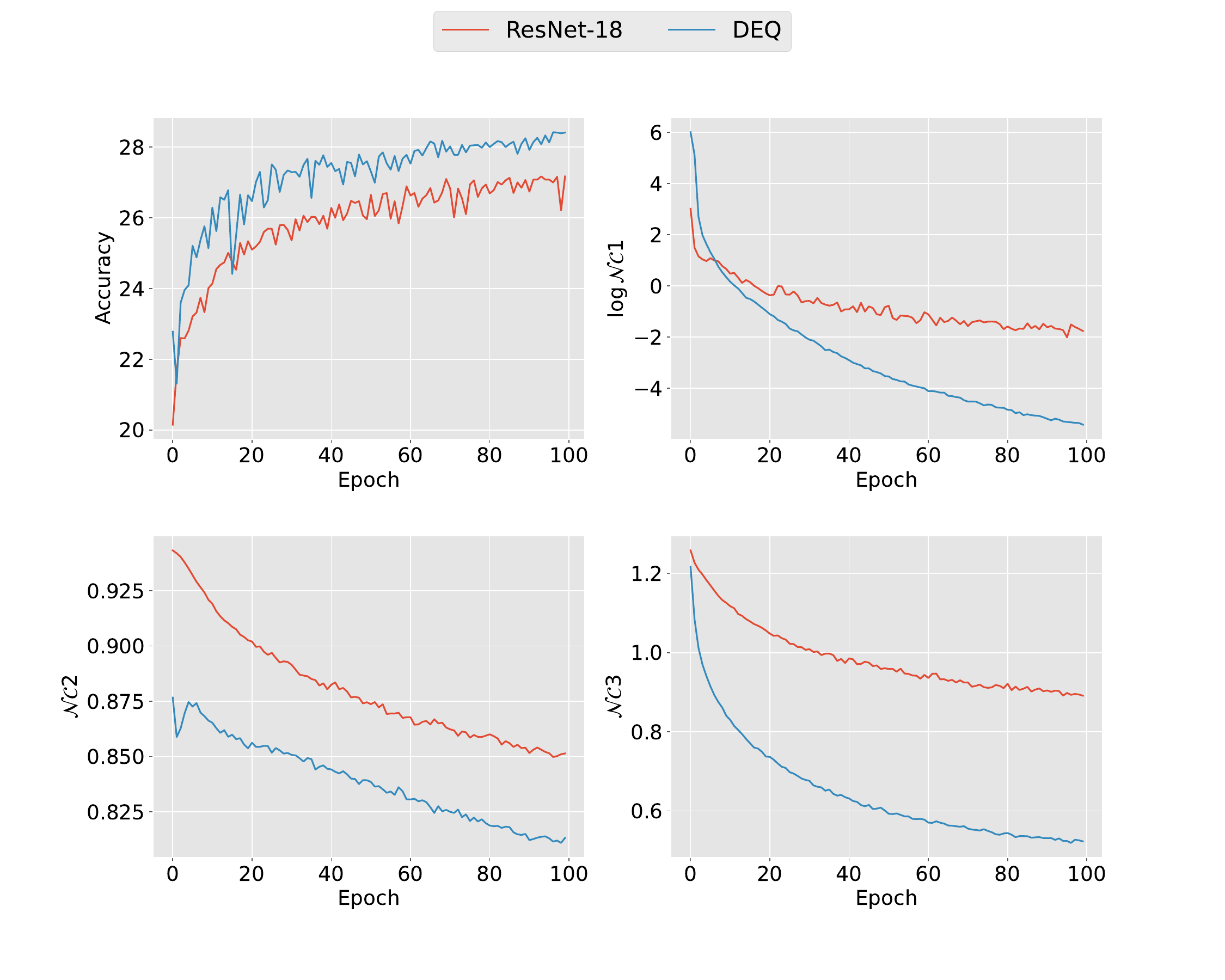}
    \caption{Accuracy and $\mathcal{NC}$ phenomenon on imbalanced dataset with $K_A=3$, $K_B=7$, $R=100$}
\end{figure}

\begin{figure}[H]
    \centering
    \includegraphics[scale=0.35]{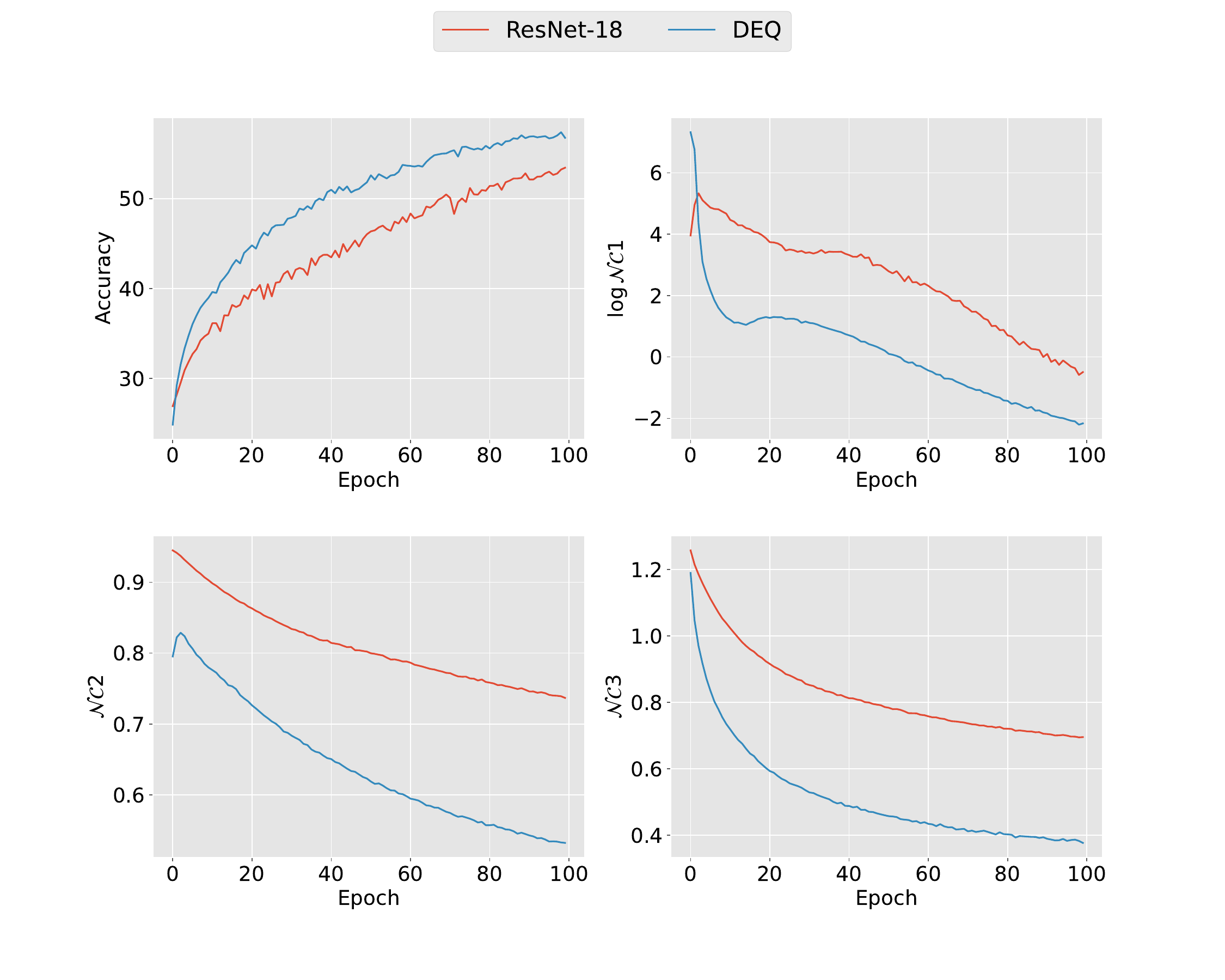}
    \caption{Accuracy and $\mathcal{NC}$ phenomenon on imbalanced dataset with $K_A=7$, $K_B=3$, $R=100$}
\end{figure}

\clearpage
\newpage
\begin{enumerate}

\item {\bf Claims}
    \item[] Question: Do the main claims made in the abstract and introduction accurately reflect the paper's contributions and scope?
    \item[] Answer: \answerYes{} 
    \item[] Justification: As stated in the abstract and introduction, this paper is the first to analyze the representation of the Deep Equilibrium Model from the perspective of Neural Collapse, accurately reflecting the key contributions and scope. 
    \item[] Guidelines: 
    \begin{itemize}
        \item The answer NA means that the abstract and introduction do not include the claims made in the paper.
        \item The abstract and/or introduction should clearly state the claims made, including the contributions made in the paper and important assumptions and limitations. A No or NA answer to this question will not be perceived well by the reviewers. 
        \item The claims made should match theoretical and experimental results, and reflect how much the results can be expected to generalize to other settings. 
        \item It is fine to include aspirational goals as motivation as long as it is clear that these goals are not attained by the paper. 
    \end{itemize}

\item {\bf Limitations}
    \item[] Question: Does the paper discuss the limitations of the work performed by the authors?
    \item[] Answer: \answerYes{} 
    \item[] Justification: The limitation analysis is provided in the Conclusion. 
    \item[] Guidelines:
    \begin{itemize}
        \item The answer NA means that the paper has no limitation while the answer No means that the paper has limitations, but those are not discussed in the paper. 
        \item The authors are encouraged to create a separate "Limitations" section in their paper.
        \item The paper should point out any strong assumptions and how robust the results are to violations of these assumptions (e.g., independence assumptions, noiseless settings, model well-specification, asymptotic approximations only holding locally). The authors should reflect on how these assumptions might be violated in practice and what the implications would be.
        \item The authors should reflect on the scope of the claims made, e.g., if the approach was only tested on a few datasets or with a few runs. In general, empirical results often depend on implicit assumptions, which should be articulated.
        \item The authors should reflect on the factors that influence the performance of the approach. For example, a facial recognition algorithm may perform poorly when image resolution is low or images are taken in low lighting. Or a speech-to-text system might not be used reliably to provide closed captions for online lectures because it fails to handle technical jargon.
        \item The authors should discuss the computational efficiency of the proposed algorithms and how they scale with dataset size.
        \item If applicable, the authors should discuss possible limitations of their approach to address problems of privacy and fairness.
        \item While the authors might fear that complete honesty about limitations might be used by reviewers as grounds for rejection, a worse outcome might be that reviewers discover limitations that aren't acknowledged in the paper. The authors should use their best judgment and recognize that individual actions in favor of transparency play an important role in developing norms that preserve the integrity of the community. Reviewers will be specifically instructed to not penalize honesty concerning limitations.
    \end{itemize}

\item {\bf Theory Assumptions and Proofs}
    \item[] Question: For each theoretical result, does the paper provide the full set of assumptions and a complete (and correct) proof?
    \item[] Answer: \answerYes{} 
    \item[] Justification: The assumptions and proofs are provided in appendix B and C. 
    \item[] Guidelines: 
    \begin{itemize}
        \item The answer NA means that the paper does not include theoretical results. 
        \item All the theorems, formulas, and proofs in the paper should be numbered and cross-referenced.
        \item All assumptions should be clearly stated or referenced in the statement of any theorems.
        \item The proofs can either appear in the main paper or the supplemental material, but if they appear in the supplemental material, the authors are encouraged to provide a short proof sketch to provide intuition. 
        \item Inversely, any informal proof provided in the core of the paper should be complemented by formal proofs provided in appendix or supplemental material.
        \item Theorems and Lemmas that the proof relies upon should be properly referenced. 
    \end{itemize}

    \item {\bf Experimental Result Reproducibility}
    \item[] Question: Does the paper fully disclose all the information needed to reproduce the main experimental results of the paper to the extent that it affects the main claims and/or conclusions of the paper (regardless of whether the code and data are provided or not)?
    \item[] Answer: \answerYes{} 
    \item[] Justification: These details are provided in Section 5.1 - Experiment setup.
    \item[] Guidelines:
    \begin{itemize}
        \item The answer NA means that the paper does not include experiments.
        \item If the paper includes experiments, a No answer to this question will not be perceived well by the reviewers: Making the paper reproducible is important, regardless of whether the code and data are provided or not.
        \item If the contribution is a dataset and/or model, the authors should describe the steps taken to make their results reproducible or verifiable. 
        \item Depending on the contribution, reproducibility can be accomplished in various ways. For example, if the contribution is a novel architecture, describing the architecture fully might suffice, or if the contribution is a specific model and empirical evaluation, it may be necessary to either make it possible for others to replicate the model with the same dataset, or provide access to the model. In general. releasing code and data is often one good way to accomplish this, but reproducibility can also be provided via detailed instructions for how to replicate the results, access to a hosted model (e.g., in the case of a large language model), releasing of a model checkpoint, or other means that are appropriate to the research performed.
        \item While NeurIPS does not require releasing code, the conference does require all submissions to provide some reasonable avenue for reproducibility, which may depend on the nature of the contribution. For example
        \begin{enumerate}
            \item If the contribution is primarily a new algorithm, the paper should make it clear how to reproduce that algorithm.
            \item If the contribution is primarily a new model architecture, the paper should describe the architecture clearly and fully.
            \item If the contribution is a new model (e.g., a large language model), then there should either be a way to access this model for reproducing the results or a way to reproduce the model (e.g., with an open-source dataset or instructions for how to construct the dataset).
            \item We recognize that reproducibility may be tricky in some cases, in which case authors are welcome to describe the particular way they provide for reproducibility. In the case of closed-source models, it may be that access to the model is limited in some way (e.g., to registered users), but it should be possible for other researchers to have some path to reproducing or verifying the results.
        \end{enumerate}
    \end{itemize}

\item {\bf Open access to data and code}
    \item[] Question: Does the paper provide open access to the data and code, with sufficient instructions to faithfully reproduce the main experimental results, as described in supplemental material?
    \item[] Answer: \answerYes{} 
    \item[] Justification: We will release the code once the paper is accepted.
    \item[] Guidelines:
    \begin{itemize}
        \item The answer NA means that paper does not include experiments requiring code.
        \item Please see the NeurIPS code and data submission guidelines (\url{https://nips.cc/public/guides/CodeSubmissionPolicy}) for more details.
        \item While we encourage the release of code and data, we understand that this might not be possible, so “No” is an acceptable answer. Papers cannot be rejected simply for not including code, unless this is central to the contribution (e.g., for a new open-source benchmark).
        \item The instructions should contain the exact command and environment needed to run to reproduce the results. See the NeurIPS code and data submission guidelines (\url{https://nips.cc/public/guides/CodeSubmissionPolicy}) for more details.
        \item The authors should provide instructions on data access and preparation, including how to access the raw data, preprocessed data, intermediate data, and generated data, etc.
        \item The authors should provide scripts to reproduce all experimental results for the new proposed method and baselines. If only a subset of experiments are reproducible, they should state which ones are omitted from the script and why.
        \item At submission time, to preserve anonymity, the authors should release anonymized versions (if applicable).
        \item Providing as much information as possible in supplemental material (appended to the paper) is recommended, but including URLs to data and code is permitted.
    \end{itemize}

\item {\bf Experimental Setting/Details}
    \item[] Question: Does the paper specify all the training and test details (e.g., data splits, hyperparameters, how they were chosen, type of optimizer, etc.) necessary to understand the results?
    \item[] Answer: \answerYes{} 
    \item[] Justification: These details are provided in Section 5.1 - Experiment setup.
    \item[] Guidelines:
    \begin{itemize}
        \item The answer NA means that the paper does not include experiments.
        \item The experimental setting should be presented in the core of the paper to a level of detail that is necessary to appreciate the results and make sense of them.
        \item The full details can be provided either with the code, in appendix, or as supplemental material.
    \end{itemize}

\item {\bf Experiment Statistical Significance}
    \item[] Question: Does the paper report error bars suitably and correctly defined or other appropriate information about the statistical significance of the experiments?
    \item[] Answer: \answerYes{} 
    \item[] Justification: The standard deviation in our experimental results (Table 1-4) shows the statistical significance. 
    \item[] Guidelines: 
    \begin{itemize}
        \item The answer NA means that the paper does not include experiments.
        \item The authors should answer "Yes" if the results are accompanied by error bars, confidence intervals, or statistical significance tests, at least for the experiments that support the main claims of the paper.
        \item The factors of variability that the error bars are capturing should be clearly stated (for example, train/test split, initialization, random drawing of some parameter, or overall run with given experimental conditions).
        \item The method for calculating the error bars should be explained (closed form formula, call to a library function, bootstrap, etc.)
        \item The assumptions made should be given (e.g., Normally distributed errors).
        \item It should be clear whether the error bar is the standard deviation or the standard error of the mean.
        \item It is OK to report 1-sigma error bars, but one should state it. The authors should preferably report a 2-sigma error bar than state that they have a 96\% CI, if the hypothesis of Normality of errors is not verified.
        \item For asymmetric distributions, the authors should be careful not to show in tables or figures symmetric error bars that would yield results that are out of range (e.g. negative error rates).
        \item If error bars are reported in tables or plots, The authors should explain in the text how they were calculated and reference the corresponding figures or tables in the text.
    \end{itemize}

\item {\bf Experiments Compute Resources}
    \item[] Question: For each experiment, does the paper provide sufficient information on the computer resources (type of compute workers, memory, time of execution) needed to reproduce the experiments?
    \item[] Answer: \answerYes{} 
    \item[] Justification: The computer resources are provided in Section 5.1 - Experiment setup. 
    \item[] Guidelines:
    \begin{itemize}
        \item The answer NA means that the paper does not include experiments.
        \item The paper should indicate the type of compute workers CPU or GPU, internal cluster, or cloud provider, including relevant memory and storage.
        \item The paper should provide the amount of compute required for each of the individual experimental runs as well as estimate the total compute. 
        \item The paper should disclose whether the full research project required more compute than the experiments reported in the paper (e.g., preliminary or failed experiments that didn't make it into the paper). 
    \end{itemize}
    
\item {\bf Code Of Ethics}
    \item[] Question: Does the research conducted in the paper conform, in every respect, with the NeurIPS Code of Ethics \url{https://neurips.cc/public/EthicsGuidelines}?
    \item[] Answer: \answerYes{} 
    \item[] Justification: Our research conformed with the NeurIPS Code of Ethics.
    \item[] Guidelines:
    \begin{itemize}
        \item The answer NA means that the authors have not reviewed the NeurIPS Code of Ethics.
        \item If the authors answer No, they should explain the special circumstances that require a deviation from the Code of Ethics.
        \item The authors should make sure to preserve anonymity (e.g., if there is a special consideration due to laws or regulations in their jurisdiction).
    \end{itemize}

\item {\bf Broader Impacts}
    \item[] Question: Does the paper discuss both potential positive societal impacts and negative societal impacts of the work performed?
    \item[] Answer: \answerNA{} 
    \item[] Justification: Our paper primarily focuses on theoretical research in machine learning, comparing two typical neural network algorithms, with no relevance to societal impacts.
    \item[] Guidelines:
    \begin{itemize}
        \item The answer NA means that there is no societal impact of the work performed.
        \item If the authors answer NA or No, they should explain why their work has no societal impact or why the paper does not address societal impact.
        \item Examples of negative societal impacts include potential malicious or unintended uses (e.g., disinformation, generating fake profiles, surveillance), fairness considerations (e.g., deployment of technologies that could make decisions that unfairly impact specific groups), privacy considerations, and security considerations.
        \item The conference expects that many papers will be foundational research and not tied to particular applications, let alone deployments. However, if there is a direct path to any negative applications, the authors should point it out. For example, it is legitimate to point out that an improvement in the quality of generative models could be used to generate deepfakes for disinformation. On the other hand, it is not needed to point out that a generic algorithm for optimizing neural networks could enable people to train models that generate Deepfakes faster.
        \item The authors should consider possible harms that could arise when the technology is being used as intended and functioning correctly, harms that could arise when the technology is being used as intended but gives incorrect results, and harms following from (intentional or unintentional) misuse of the technology.
        \item If there are negative societal impacts, the authors could also discuss possible mitigation strategies (e.g., gated release of models, providing defenses in addition to attacks, mechanisms for monitoring misuse, mechanisms to monitor how a system learns from feedback over time, improving the efficiency and accessibility of ML).
    \end{itemize}
    
\item {\bf Safeguards}
    \item[] Question: Does the paper describe safeguards that have been put in place for responsible release of data or models that have a high risk for misuse (e.g., pretrained language models, image generators, or scraped datasets)?
    \item[] Answer: \answerNA{} 
    \item[] Justification: Our paper uses standard CIFAR-10 and CIFAR-100 datasets, which do not involve such issues.
    \item[] Guidelines:
    \begin{itemize}
        \item The answer NA means that the paper poses no such risks.
        \item Released models that have a high risk for misuse or dual-use should be released with necessary safeguards to allow for controlled use of the model, for example by requiring that users adhere to usage guidelines or restrictions to access the model or implementing safety filters. 
        \item Datasets that have been scraped from the Internet could pose safety risks. The authors should describe how they avoided releasing unsafe images.
        \item We recognize that providing effective safeguards is challenging, and many papers do not require this, but we encourage authors to take this into account and make a best faith effort.
    \end{itemize}

\item {\bf Licenses for existing assets}
    \item[] Question: Are the creators or original owners of assets (e.g., code, data, models), used in the paper, properly credited and are the license and terms of use explicitly mentioned and properly respected?
    \item[] Answer: \answerYes{} 
    \item[] Justification: The datasets CIFAR-10 and CIFAR-100 are cited properly. Other assets are not applied in this paper. 
    \item[] Guidelines:
    \begin{itemize}
        \item The answer NA means that the paper does not use existing assets.
        \item The authors should cite the original paper that produced the code package or dataset.
        \item The authors should state which version of the asset is used and, if possible, include a URL.
        \item The name of the license (e.g., CC-BY 4.0) should be included for each asset.
        \item For scraped data from a particular source (e.g., website), the copyright and terms of service of that source should be provided.
        \item If assets are released, the license, copyright information, and terms of use in the package should be provided. For popular datasets, \url{paperswithcode.com/datasets} has curated licenses for some datasets. Their licensing guide can help determine the license of a dataset.
        \item For existing datasets that are re-packaged, both the original license and the license of the derived asset (if it has changed) should be provided.
        \item If this information is not available online, the authors are encouraged to reach out to the asset's creators.
    \end{itemize}

\item {\bf New Assets}
    \item[] Question: Are new assets introduced in the paper well documented and is the documentation provided alongside the assets?
    \item[] Answer: \answerNA{} 
    \item[] Justification: This paper does not introduce new assets.
    \item[] Guidelines:
    \begin{itemize}
        \item The answer NA means that the paper does not release new assets.
        \item Researchers should communicate the details of the dataset/code/model as part of their submissions via structured templates. This includes details about training, license, limitations, etc. 
        \item The paper should discuss whether and how consent was obtained from people whose asset is used.
        \item At submission time, remember to anonymize your assets (if applicable). You can either create an anonymized URL or include an anonymized zip file.
    \end{itemize}

\item {\bf Crowdsourcing and Research with Human Subjects}
    \item[] Question: For crowdsourcing experiments and research with human subjects, does the paper include the full text of instructions given to participants and screenshots, if applicable, as well as details about compensation (if any)? 
    \item[] Answer: \answerNA{} 
    \item[] Justification: Our paper does not involve crowdsourcing nor research with human subjects. 
    \item[] Guidelines:
    \begin{itemize}
        \item The answer NA means that the paper does not involve crowdsourcing nor research with human subjects.
        \item Including this information in the supplemental material is fine, but if the main contribution of the paper involves human subjects, then as much detail as possible should be included in the main paper. 
        \item According to the NeurIPS Code of Ethics, workers involved in data collection, curation, or other labor should be paid at least the minimum wage in the country of the data collector. 
    \end{itemize}

\item {\bf Institutional Review Board (IRB) Approvals or Equivalent for Research with Human Subjects}
    \item[] Question: Does the paper describe potential risks incurred by study participants, whether such risks were disclosed to the subjects, and whether Institutional Review Board (IRB) approvals (or an equivalent approval/review based on the requirements of your country or institution) were obtained?
    \item[] Answer: \answerNA{} 
    \item[] Justification: Our paper does not involve crowdsourcing nor research with human subjects.
    \item[] Guidelines:
    \begin{itemize}
        \item The answer NA means that the paper does not involve crowdsourcing nor research with human subjects.
        \item Depending on the country in which research is conducted, IRB approval (or equivalent) may be required for any human subjects research. If you obtained IRB approval, you should clearly state this in the paper. 
        \item We recognize that the procedures for this may vary significantly between institutions and locations, and we expect authors to adhere to the NeurIPS Code of Ethics and the guidelines for their institution. 
        \item For initial submissions, do not include any information that would break anonymity (if applicable), such as the institution conducting the review.
    \end{itemize}

\end{enumerate}

\end{document}